%%
%% This is file `sample-acmcp.tex',
%% generated with the docstrip utility.
%%
%% The original source files were:
%%
%% samples.dtx  (with options: `all,journal,acmcp')
%% 
%% IMPORTANT NOTICE:
%% 
%% For the copyright see the source file.
%% 
%% Any modified versions of this file must be renamed
%% with new filenames distinct from sample-acmcp.tex.
%% 
%% For distribution of the original source see the terms
%% for copying and modification in the file samples.dtx.
%% 
%% This generated file may be distributed as long as the
%% original source files, as listed above, are part of the
%% same distribution. (The sources need not necessarily be
%% in the same archive or directory.)
%%
%%
%% Commands for TeXCount
%TC:macro \cite [option:text,text]
%TC:macro \citep [option:text,text]
%TC:macro \citet [option:text,text]
%TC:envir table 0 1
%TC:envir table* 0 1
%TC:envir tabular [ignore] word
%TC:envir displaymath 0 word
%TC:envir math 0 word
%TC:envir comment 0 0
%%
%%
%% The first command in your LaTeX source must be the \documentclass
%% command.
%%
%% For submission and review of your manuscript please change the
%% command to \documentclass[manuscript, screen, review]{acmart}.
%%
%% When submitting camera ready or to TAPS, please change the command
%% to \documentclass[sigconf]{acmart} or whichever template is required
%% for your publication.
%%
%%
\documentclass[sigconf]{acmart}

%%
%% \BibTeX command to typeset BibTeX logo in the docs
\AtBeginDocument{%
  }

%% Rights management information.  This information is sent to you
%% when you complete the rights form.  These commands have SAMPLE
%% values in them; it is your responsibility as an author to replace
%% the commands and values with those provided to you when you
%% complete the rights form.
% \copyrightyear{2024}
% \acmYear{2024}
% \setcopyright{acmlicensed}
% \acmConference[KDD '24] {Proceedings of the 30th ACM SIGKDD Conference on Knowledge Discovery and Data Mining }{August 25--29, 2024}{Barcelona, Spain.}
% \acmBooktitle{Proceedings of the 30th ACM SIGKDD Conference on Knowledge Discovery and Data Mining (KDD '24), August 25--29, 2024, Barcelona, Spain}
% \acmISBN{979-8-4007-0490-1/24/08}
% \acmDOI{10.1145/XXXXXX.XXXXXX}

\copyrightyear{2024}
\acmYear{2024}
\setcopyright{acmlicensed}\acmConference[KDD '24]{Proceedings of the 30th
ACM SIGKDD Conference on Knowledge Discovery and Data Mining}{August
25--29, 2024}{Barcelona, Spain}
\acmBooktitle{Proceedings of the 30th ACM SIGKDD Conference on Knowledge
Discovery and Data Mining (KDD '24), August 25--29, 2024, Barcelona, Spain}
\acmDOI{10.1145/3637528.3671774}
\acmISBN{979-8-4007-0490-1/24/08}
% 1 Authors, replace the red X's with your assigned DOI string during the rightsreview eform process.
% 2 Your DOI link will become active when the proceedings appears in the DL.
% 3 Retain the DOI string between the curly braces for uploading your presentation video.

\settopmatter{printacmref=true}

% \setcopyright{acmlicensed}
% \copyrightyear{2018}
% \acmYear{2018}
% \acmDOI{XXXXXXX.XXXXXXX}

%%
%% These commands are for a JOURNAL article.
% \acmJournal{JDS}
% \acmVolume{37}
% \acmNumber{4}
% \acmArticle{111}
% \acmMonth{8}

%%
%% Submission ID.
%% Use this when submitting an article to a sponsored event. You'll
%% receive a unique submission ID from the organizers
%% of the event, and this ID should be used as the parameter to this command.
%%\acmSubmissionID{123-A56-BU3}

\usepackage[utf8]{inputenc} % allow utf-8 input
\usepackage[T1]{fontenc}    % use 8-bit T1 fonts
\usepackage{hyperref}       % hyperlinks
\usepackage{url}            % simple URL typesetting
\usepackage{booktabs}       % professional-quality tables
\usepackage{amsfonts}       % blackboard math symbols
\usepackage{nicefrac}       % compact symbols for 1/2, etc.
\usepackage{microtype}      % microtypography
\usepackage{url}
\usepackage{xcolor}         % colors
\usepackage{bm}
\usepackage{xspace}
\usepackage{multirow} 
\usepackage{graphicx}
\usepackage{subfig}
\usepackage{makecell}
\usepackage[ruled,vlined]{algorithm2e}
\usepackage{amsthm}
\usepackage{amsmath}
\theoremstyle{plain}
\newtheorem{theorem}{Theorem}

\usepackage{balance} 
\usepackage{pifont}
\usepackage{makecell}
\theoremstyle{plain}
\newtheorem*{theorem*}{Theorem}

\theoremstyle{remark}

\newcommand{\ours}{$R^{2}$LP\xspace}
%%
%% end of the preamble, start of the body of the document source.
\begin{document}

%%
%% The "title" command has an optional parameter,
%% allowing the author to define a "short title" to be used in page headers.
\title{Resurrecting Label Propagation for Graphs with Heterophily and Label Noise}

%%
%% The "author" command and its associated commands are used to define
%% the authors and their affiliations.
%% Of note is the shared affiliation of the first two authors, and the
%% "authornote" and "authornotemark" commands
%% used to denote shared contribution to the research.
\author{Yao Cheng}
\affiliation{%
  \institution{East China Normal University}
  \city{Shanghai}
  \country{China}}
\email{yaocheng_623@stu.ecnu.edu.cn}

\author{Caihua Shan}
\affiliation{%
  \institution{Microsoft Research Asia}
  \city{Shanghai}
  \country{China}}
\email{caihuashan@microsoft.com}

\author{Yifei Shen}
\affiliation{%
  \institution{Microsoft Research Asia}
  \city{Shanghai}
  \country{China}}
\email{yshenaw@connect.ust.hk}

\author{Xiang Li
% \thanks{Xiang Li is the corresponding author}
}
\authornote{Xiang Li is the corresponding author}
\affiliation{%
  \institution{East China Normal University}
  \city{Shanghai}
  \country{China}}
\email{xiangli@dase.ecnu.edu.cn}

\author{Siqiang Luo}
\affiliation{%
  \institution{Nanyang Technological University}
  \city{Singapore}
  \country{Singapore}
  }
\email{siqiang.luo@ntu.edu.sg}

\author{Dongsheng Li}
\affiliation{%
  \institution{Microsoft Research Asia}
  \city{Shanghai}
  \country{China}}
\email{dongsheng.li@microsoft.com
}
%%
%% By default, the full list of authors will be used in the page
%% headers. Often, this list is too long, and will overlap
%% other information printed in the page headers. This command allows
%% the author to define a more concise list
%% of authors' names for this purpose.
\renewcommand{\shortauthors}{Cheng et al.}
%%
%% Article type: Research, Review, Discussion, Invited or position
\acmArticleType{Research}
%%
%% Links to code and data
% \acmCodeLink{https://github.com/borisveytsman/acmart}
% \acmDataLink{htps://zenodo.org/link}
%%
%% Authors' contribution
% \acmContributions{BT and GKMT designed the study; LT, VB, and AP
%   conducted the experiments, BR, HC, CP and JS analyzed the results,
%   JPK developed analytical predictions, all authors participated in
%   writing the manuscript.}
%%
%% Sometimes the addresses are too long to fit on the page.  In this
%% case uncomment the lines below and fill them accodingly.
%%
%% \authorsaddresses{Corresponding author: Ben Trovato,
%% \href{mailto:trovato@corporation.com}{trovato@corporation.com};
%% Institute for Clarity in Documentation, P.O. Box 1212, Dublin,
%% Ohio, USA, 43017-6221}
%%
%%
%% Keywords. The author(s) should pick words that accurately describe
%% the work being presented. Separate the keywords with commas.

\begin{abstract}
  Label noise is a common challenge in large datasets, as it can significantly degrade the generalization ability of deep neural networks. Most existing studies focus on noisy labels in computer vision; however, graph models encompass both node features and graph topology as input, and become more susceptible to label noise through message-passing mechanisms.
  %, while the graph-structured data both have the node/edge features and graph topology as the input. Consequently, it is more susceptible to label noise through message-passing mechanisms in graph neural networks. 
  Recently, only a few works have been proposed to tackle the label noise on graphs. 
  % One major limitation is that they assume the graph is homophilous and the labels are smoothly distributed. 
  One significant limitation is that they operate under the assumption that the graph exhibits homophily and that the labels are distributed smoothly.
  % Nevertheless, real-world graphs may contain varying degrees of heterophily or even be heterophily-dominated, leading to the inadequacy of current methods.
  However, real-world graphs can exhibit varying degrees of heterophily, or even be dominated by heterophily, which results in the inadequacy of the current methods.
  %However, real-world graphs may contain varying degrees of heterophily, and may even be heterophily-dominated, leading to the failure of existing methods.
  %Taken together, the problem of graph label noise still a significant challenge.
  %Existing methods fail to correct the labels, leading to performance degradation.
  
  In this paper, we study graph label noise in the context of arbitrary heterophily, with the aim of rectifying noisy labels and assigning labels to previously unlabeled nodes.
  We begin by conducting two empirical analyses to explore the impact of graph homophily on graph label noise. Following observations, we propose a 
  % simple yet 
  efficient algorithm, denoted as \ours.
  Specifically, \ours\ is an iterative algorithm with three steps: (1) reconstruct the graph to recover the homophily property, (2) utilize label propagation to rectify the noisy labels, (3) select high-confidence labels to retain for the next iteration. By iterating these steps, we obtain a set of ``correct'' labels, ultimately achieving high accuracy in the node classification task. The theoretical analysis is also provided to demonstrate its remarkable denoising effect.
  % In this work, we first conduct empirical analysis to assess the impact of graph homophily on graph label noise.
  % Further, we provide theoretical analysis regarding the denoising effect of label propagation (LP) algorithms based on graph homophily.
  % Finally, we propose a simple yet effective algorithm to address the problem of graph label noise by multi-round homophilous graph structure reconstruction, noisy labels correction and sample selection, with a focus on semi-supervised node classification.
  Finally, we perform experiments on ten benchmark datasets with different levels of graph heterophily and various types of noise. In these experiments, we compare the performance of \ours\ against ten typical baseline methods.
  % Finally, we conduct experiments on 10 benchmark datasets under varying graph heterophily levels and noise types, comparing the performance of \ours with 10 typical baselines. 
  Our results illustrate the superior performance of the proposed \ours. 
  The code and data of this paper can be accessed at: \url{https://github.com/cy623/R2LP.git}.

\end{abstract}

\begin{CCSXML}
<ccs2012>
   <concept>
       <concept_id>10010147.10010178</concept_id>
       <concept_desc>Computing methodologies~Artificial intelligence</concept_desc>
       <concept_significance>500</concept_significance>
       </concept>
   <concept>
       <concept_id>10010147.10010257.10010293.10010294</concept_id>
       <concept_desc>Computing methodologies~Neural networks</concept_desc>
       <concept_significance>500</concept_significance>
       </concept>
 </ccs2012>
\end{CCSXML}

\ccsdesc[500]{Computing methodologies~Artificial intelligence}
\ccsdesc[500]{Computing methodologies~Neural networks}

\keywords{Graphs with Heterophily; Graph Label Noise; Graph Neural Networks}

\maketitle

\section{Introduction}
Graphs are ubiquitous in the real world, 
such as social networks~\cite{leskovec2010predicting}, 
biomolecular structures~\cite{zitnik2017predicting} and knowledge graphs~\cite{hogan2021knowledge}. 
In graphs, 
nodes and edges represent entities and their relationships respectively.
To enrich the information of graphs,
each node is usually associated with a descriptive label.
For example, 
each user in Facebook has interest as its label.
% Both node features and edge structure provide information sources for graph-structured learning.
Recently,
to learn from graphs,
graph neural networks (GNNs) have achieved significant success,
which can seamlessly integrate node features with graph structure to learn node representations.
% on graph structured data.
% They excel at capturing complex patterns by leveraging both node features and structural information effectively.
% One of the most important and popular tasks is node classification.
GNNs generally adopt the message passing strategy to
% utilize the graph structure to 
aggregate information from nodes' neighbors and then update node representations.
After that, 
the learned representations are fed into various downstream tasks, such as node classification.
% for classification task in a semi-supervised manner.

Despite their great performance, GNNs still encounter challenges from label noise, which is a pervasive issue in real-world datasets.
For example, 
non-expert sources like Amazon's Mechanical Turk and web pages can be used to collect labels, 
which could unintentionally lead to 
unreliable labels and noise.
It is well known that deep neural networks are very sensitive to noise, 
thereby degrading their generalization ability~\cite{zhang2021understanding}. 
To address the problem,
% Currently,
there have been a majority of 
neural network models~\cite{patrini2017making,huang2020self,song2019selfie,yu2019does}
% have been 
proposed in the field of computer vision (CV)
to tackle label noise in images.
Unfortunately, we cannot
% despite the success,
% these methods cannot be directly applied to graph-structured data
% due to the non-Euclidean property of graphs.
% However, 
% most existing work on label noise focuses on non-graph-structured data, 
% which is learned in a supervised manner, such as images~\cite{patrini2017making,huang2020self,song2019selfie,han2018co}.
% The independent and identically distributed (i.i.d) data
% learning in a supervised manner.
directly apply these approaches 
% from other domains 
to graph-structured data
as graphs are non-Euclidean data types.
% is insufficient 
% as nodes in graphs are directly influenced through message-passing mechanisms~\cite{qian2023robust}.
Therefore, exploring the issue of graph label noise is highly necessary.

\begin{figure}
  \centering  \includegraphics[width=1.0\linewidth]{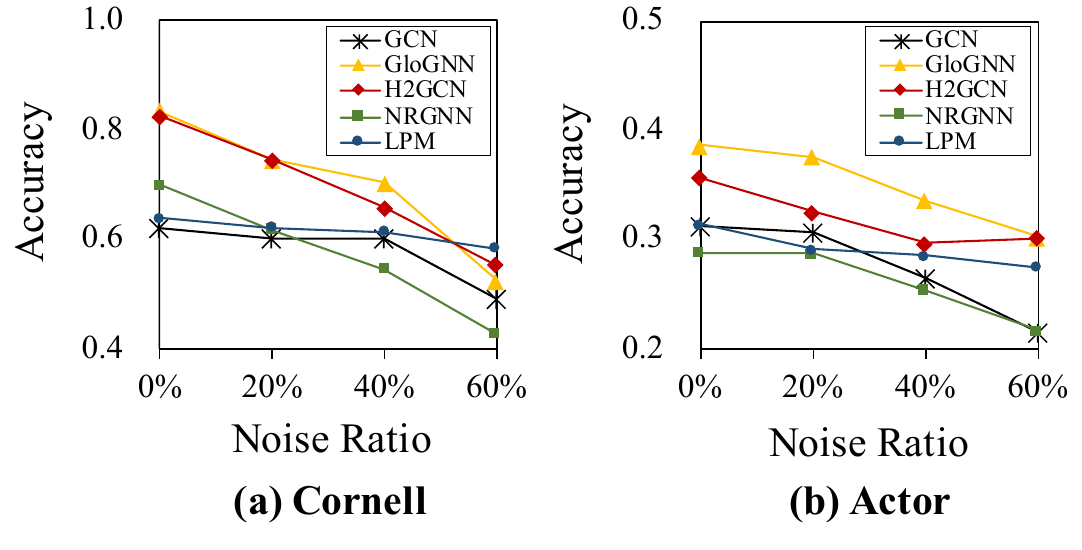}
   \vspace{-2em}
  \caption{The classification accuracy with flip label noise ranging from 0\% to 60\% on Cornell and Actor datasets.
  % on Cora with flip noise, LPM~\cite{LPM} and NRGNN~\cite{Nrgnn} are SOTA mothods for graph label noise, while GloGNN~\cite{li2022finding} is specially designed for graph heterophily.
  }
  \label{fg:hete_drop}
\vspace{-1em} 
\end{figure}

Recently, very few works have been proposed to deal with graph label noise~\cite{LPM, Nrgnn}.
% These few models still suffer from two main issues. 
% On the one hand,
% all of them 
These models still 
attenuate graph label noise based on an explicit \emph{homophily} assumption that 
linked nodes in a graph are more likely to be with the same label.
However, 
homophily is not a universal principle in the real world.
In fact, 
graph datasets often exhibit varying degrees of homophily, 
and may even be heterophily-dominated.
Methods based on the {homophily} assumption evidently cannot handle these heterophilous graphs.
% As shown in Figure~\ref{fg:hete_drop}, we compare the performance of GloGNN~\cite{li2022finding}, which is specially designed for heterophilous graphs,
% with two SOTA methods for graph label noise, LPM~\cite{LPM} and NRGNN~\cite{Nrgnn}, 
% on the heterophilous benchmark dataset \emph{Chameleon} with flip noise ranging from 0\% to 60\%. The results indicate that on heterophilous graphs, both LPM and NRGNN are underperformed by GNN models designed for graph heterophily.
As shown in Figure~\ref{fg:hete_drop}, we compare the performance of GCN, H$_{2}$GCN~\cite{H2GCN} and GloGNN~\cite{li2022finding}, which are specially designed for homophilous and heterophilous graphs,
against two state-of-the-art methods for graph label noise, LPM~\cite{LPM} and NRGNN~\cite{Nrgnn}, 
on the heterophilous benchmark dataset \emph{Cornell} and \emph{Actor} with {flip noise}\footnote{Flip noise and uniform noise are explained in Section~\ref{sec:es}.} ranging from 0\% to 60\%. The results indicate that on heterophilous graphs, 
current methods for label noise (LPM and NRGNN), are still underperformed by GNN models designed for graph heterophily (H$_{2}$GCN and GloGNN).
% On the other hand, some previous works aim to augment supervision through pseudo-labels rather than correct noisy labels to mitigate their negative impact. However, as the proportion of graph label noise increases, pseudo-labels tend to be incorrect and it becomes challenging for the augmented supervision information to offset the negative impact of noisy labels.
%To sum up, 
The problem of graph label noise deserves further investigation.

%As shown in Figure~\ref{fg:hete_noise}, we changed the edge homophily rate and added label noise on two benchmark heterophilous datasets \emph{Actor} and \emph{Chameleon}. The performance of two state-of-the-art methods for graph label noise, LPM~\cite{LPM} and NRGNN~\cite{Nrgnn}, deteriorated significantly, highlighting their vulnerability to low homophily.

In this paper, we study graph label noise in the context of arbitrary heterophily, with the aim of rectifying noisy labels and assigning labels to previously unlabeled nodes.
We first explore the impact of graph homophily on graph label noise by conducting a series of experiments on benchmark heterophilous datasets. Specifically, we manipulate graph homophily by graph reconstruction to see if existing methods could improve their classification performance against label noise.
The observations demonstrate that a high graph homophily rate can indeed mitigate the effect of graph label noise on GNN performance.
% graph label noise. 
In addition,
we find that label propagation (LP)~\cite{lp} based methods achieve great performance as graph homophily increases.

These inspirations led us to propose a 
% simple yet 
effective algorithm, \ours, incorporating graph \textbf{R}econstruction for homophily and noisy label \textbf{R}ectification by \textbf{L}abel \textbf{P}ropagation. 
We further refine this basic solution in several aspects. In terms of effectiveness, we implement a multi-round selection process as opposed to correcting noisy labels all at once. Specifically, we select high-confidence noisy labels to augment the correct label set, and then repeat the algorithm, achieving higher accuracy. 
With respect to efficiency, we reduce the time complexity to linear time by unifying the computation of graph construction and label propagation. Ultimately, we provide a theoretical proof of \ours, illustrating its impressive denoising effect in addressing graph label noise.

% In this paper,
% we study graph label noise 
% and propose to rectify incorrect labels.
% We first explore 
% how graph homophily impacts graph label noise in node classification by conducting a series of experiments on benchmark datasets of various heterophilies.
% Specifically, we gradually increased graph homophily on each dataset to see if existing methods could improve their classification performance against label noise.
% We empirically observe that graph homophily can indeed mitigate graph label noise.
% \sch{Surprisingly, existing methods can still achieve better performance in a homophilous graph (edge homophily > 0.5) with a high noise rate (approximately 0.6), as compared to the original heterophilous graph without any noise.
% It inspired us to incorporate an extra module to reconstruct a graph with high homophily, which would subsequently help us address label noise, and improve the overall performance.
% }
% In addition,
% we find that label propagation (LP)~\cite{lp} achieves comparable performance with 
% the winner as graph homophily increases.
% We also provide theoretical analysis regarding the denoising effect of LP algorithm.
% After that, 
% based on our empirical observations and theoretical results,
% we integrate LP with GNN and 
% propose a simple yet effective algorithm \ours\ to address the problem of graph label noise by iteratively performing homophilous graph structure reconstruction, noisy label correction and sample selection.

To summarize, our main contributions are:

% \noindent$\bullet$\textbf{Formalization:}
\noindent$\bullet$ \textbf{Empirical Analysis:} 
% We conduct the  the relationship between the homophily and label noise in the graph. 
% Specifically, 
%We first investigate the influence of graph homophily on label noise across various types of methods and real-world datasets. 
We first investigate the influence of graph homophily on label noise through the manipulation of homophily levels and noise ratios within real-world datasets. 
Moreover, we incorporate three common homophilous graph reconstruction modules into existing approaches, to evaluate the performance improvement of these approaches in heterophilous graphs.

% Specifically, 
% we demonstrated that the graph homophily effectively mitigates the negative impact of label noise. 
% To the best of our knowledge, 
% this is the first formal analysis of its kind.

\noindent$\bullet$ \textbf{Algorithm Design and Theoretical Proof:}
% We demonstrate how to leverage the relationship between homophily and label noise to design more effective algorithms for addressing graph label noise. 
Based on empirical observations, 
we propose a 
% simple yet 
effective algorithm \ours\ to deal with the issue of graph label noise.
Specifically, 
\ours\ is an iterative algorithm with three steps: (1) reconstruct the graph to recover the homophily property, (2) utilize label propagation to correct the noisy labels, (3) select high-confidence corrected labels and add them to the clean label set to retain for the next iteration. By repeating these steps, we obtain a set of ``correct'' labels to achieve accurate node classification.
We also provide a theoretical analysis of the denoising effect of \ours.
%algorithms based on graph homophily.
% Specifically,
% We first reconstruct a new homophilous structure for the graph and demonstrate how using label propagation on this new structure corrects erroneous labels. Subsequently, 
% we select trustworthy correction labels to add to the clean training set. Multiple rounds of label correction and sample selection enhance the accuracy of the algorithm.

\noindent$\bullet$ \textbf{Extensive Experiments:} 
We conduct extensive experiments on two homophilous datasets and eight heterophilous datasets with varying noise types and ratios. Our proposed \ours outperforms baselines in the majority of cases, demonstrating the superior performance of \ours. The ablation studies are also done to verify the necessity of each component of \ours.
% We demonstrate the effectiveness of homophily in addressing the graph label noise problem through extensive experiments on real-world homogeneous and heterogeneous datasets. Specifically, 
% we first empirically evaluate various baseline methods on datasets with varying degrees of homophily and label noise. 
% Next, 
% we validate the performance of the proposed algorithm in comparison to other methods across different datasets. 
% Additionally, 
% we confirm the performance improvement of our analytical conclusions on existing methods.

\section{Preliminaries}
\textbf{Problem Definition.}
We consider the semi-supervised node classification task in an undirected graph 
$\mathcal{G} = \mathcal{(V, E)}$
where $\mathcal{V} = \left \{  v_{i} \right \}_{i=1}^{n} $ is a set of nodes
and $\mathcal{E\subseteq V\times V }$ is a set of edges.
Let the label set as $\mathcal{Y}$ with $c$ classes, and $\mathcal{V}= \mathcal{C}\cup \mathcal{N}\cup \mathcal{U}$, where $\mathcal{C}$ is set of labeled nodes with clean labels $Y_{\mathcal{C}}$, $\mathcal{N}$ is a set of labeled nodes with noise labels $Y_{\mathcal{N}}$ and $\mathcal{U}$ is set of unlabeled nodes.
In the paper, $Y_{\mathcal{C}}$ denotes the set of known clean labels, while $Y_{\mathcal{N}}$ represents the collection containing a certain proportion of noisy labels. 
Both of them are given as inputs. It is unknown which labels in $Y_{\mathcal{N}}$ are erroneous.
Let $A$ be the adjacency matrix of $\mathcal{G}$ such that 
$A_{ij} =1$ if there exists an edge between nodes 
$v_{i}$ and $v_{j}$; 0, otherwise.
For the initial node feature matrix, we denote it as $X$.
% $\mathcal{G}=(\bm{X}, \bm{A})$ where $\bm{X}  \in \mathbb{R}^{n \times d}$ is the node features and $\bm{A} \in \mathbb{R}^{n \times n} $ is the adjacency matrix. Suppose we have a subset of noisy node labels $\bm{\tilde{Y}}$, and a few clean node labels $\bm{Y}$. 
Our goal is to design an effective algorithm 
%learn an optimal graph neural network $f_{\theta}$ 
to predict the class label for noisy and unlabeled nodes. 

\noindent\textbf{GNN and Label Propagation Basics.}
Most graph neural networks follow the message-passing mechanism, which consists of two main steps: aggregation and update. GNNs first aggregate information from neighboring nodes $\hat{h} _{i}^{(l)}=\text{AGGREGATE}(h_{j}^{(l-1)}, \forall v_{j}\in \mathcal{N}_{i})$, and then update each node embeddings $h_{i}^{(l)}=\text{UPDATE}(h_{i}^{(l-1)}, \hat{h} _{i}^{(l)})$.  
After $L$ aggregations, the final node embedding $H^{(L)}$ is obtained and then fed into a softmax layer to generate probability logits for label prediction.
Different from GNNs, label propagation directly aggregates the labels from neighboring nodes through the formula $Y_{i}^{(l)}= (1-\alpha) \cdot Y_{i}^{(l-1)}+\alpha \cdot \text{AGGREGATE}(Y_{j}^{(l-1)}, \forall v_{j}\in \mathcal{N}_{i})$. It converges when labels do not change significantly between iterations.
Depending on the various utilization of labels, the robustness of GNNs and LP to graph label noise is also different.

\begin{figure*}[t]
    \centering
    \includegraphics[scale=0.45]{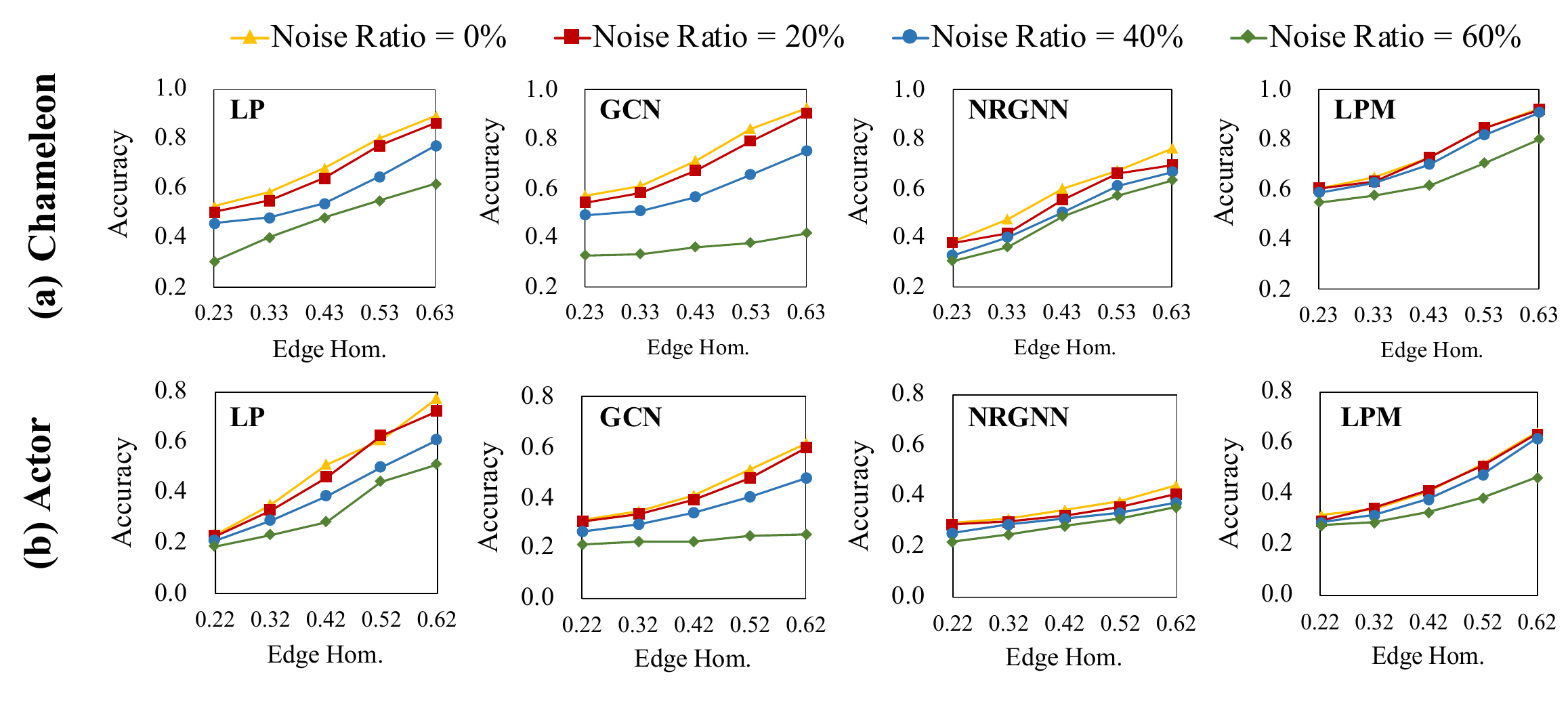}
    % \subfloat[Chameleon Dataset]{
    % 	\includegraphics[scale=0.4]{homo_noise_chameleon_flip.pdf}}\\
    % \subfloat[Actor Dataset]{
    % \includegraphics[scale=0.4]{homo_noise_film_flip.pdf}}\\
   \vspace{-0.5em}
    \caption{The impact of edge homophily on graph label noise across various methods (flip noise)}
    \label{fg:homo_noise_flip}
 \end{figure*}

\section{How does homophily impact graph label noise?}

In this section, we investigate the potential of mitigating the effect of graph label noise by decreasing the graph heterophily rate. To do this, we first outline existing approaches for graph label noise. 
Next, we introduce three graph reconstruction modules to transform the original graphs into homophilous ones.
Ultimately, we modify the graph structure in heterophilous datasets through two manipulations: (1) converting the heterophilous edges into homophilous ones based on the ground-truth labels and (2) reconstructing the homophilous graph. We then apply the existing approaches to the modified datasets and several critical observations are derived from these experiments.

% Specifically,
% we analyze the impact of any graph homophily on any label noise across various methods on real-world datasets.
% Further, three reconstruction modules are integrated into existing approaches, aiming to enhance their performance in heterophilous graphs. 
% By observing and analyzing the results, we delve into the relationship between graph homophily and graph label noise.

\subsection{Existing Approaches}
Here we introduce two state-of-the-art methods for graph label noise, LPM~\cite{LPM} and NRGNN~\cite{Nrgnn}.

\noindent$\bullet$ \textbf{LPM} is proposed to tackle graph label noise by combining label propagation and meta-learning. It consists of three modules: a GNN for feature extraction, LP for pseudo-label generation and an aggregation network for merging labels.
Initially, the similarity matrix $A'_{ij} =\frac{A_{ij} }{d(h_{i}, h_{j})+\tau}$ is calculated based on the adjacency matrix $A$ and the node embeddings $h_{i}$ and $h_{j}$.
These embeddings are learned by the GNN feature extraction. The function $d(\cdot, \cdot)$ represents a distance measure, and $\tau$ is an infinitesimal constant. Then LP is performed on $A'$ to generate pseudo-labels. Lastly, the aggregation network is used to integrate the original labels and pseudo-labels, resulting in the final labels through meta-learning.

\noindent$\bullet$ \textbf{NRGNN} is also designed to deal with graph label noise and label sparsity, including three modules: an edge predictor, a pseudo-label predictor, and a GNN classifier.
The edge predictor first generates extra edges to connect similar labeled and unlabeled nodes by the link prediction task. 
As a result, the newly generated adjacency matrix $A'$ becomes denser. Then the pseudo-label predictor estimates pseudo-labels $Y'$ for unlabeled nodes. The GNN classifier outputs the final predictions based on the updated $A'$ and $Y'$. 
%In NRGNN, they tend to link the labeled and unlabeled nodes in the same class for label sparsity on the graph.
%The adjacency matrix $A$ is used as input for learning node representations and also as the loss of adjacency matrix reconstruction to learn to predict missing links.

Formally,
% \noindent$\bullet$\textbf{Formulation for LPM and NRGNN.}
the overall process of these two methods can be formulated into three steps:
\begin{equation} 
\begin{aligned}
\label{eq:current_three_func}
  A' &= f_\text{reconstruct}(A, X), \\ %\quad \text{and}\quad
  Y' &= f_\text{pseudo-label}(Y, A'), \\
  Y_\text{final} &= f_\text{predict}(X, A',Y, Y').
\end{aligned}
\end{equation}
%We formally defined the modifications of the original graph structure by LPM and NRGNN as graph reconstruction:
%\begin{equation} 
%   A' = f_\text{reconstruction}(A, X)
%\end{equation}
%Similarly, we formally defined the generation of pseudo labels by LPM and NRGNN as:
%\begin{equation} 
%   Y' = g_\text{pseudolabel}(Y, A')
%\end{equation}
%Both LPM and NRGNN rely on the \emph{homophily} assumption to reconstruct the adjacency matrix, but homophily is not a universal principle in the real world.
%Furthermore, LPM and NRGNN enhance supervised information through pseudo-label generation rather than correcting noisy labels to mitigate their negative impact.
Note that
both LPM and NRGNN depend on the graph homophily assumption. 
%However, graph homophily is not a universal principle in reality, which restricts their wide applicability in graphs with heterophily.
Further, they mainly concentrate on pseudo-label generation, while overlooking the correction of noisy labels.

\begin{figure*}[t]
   \centering  
   \includegraphics[scale=0.4]{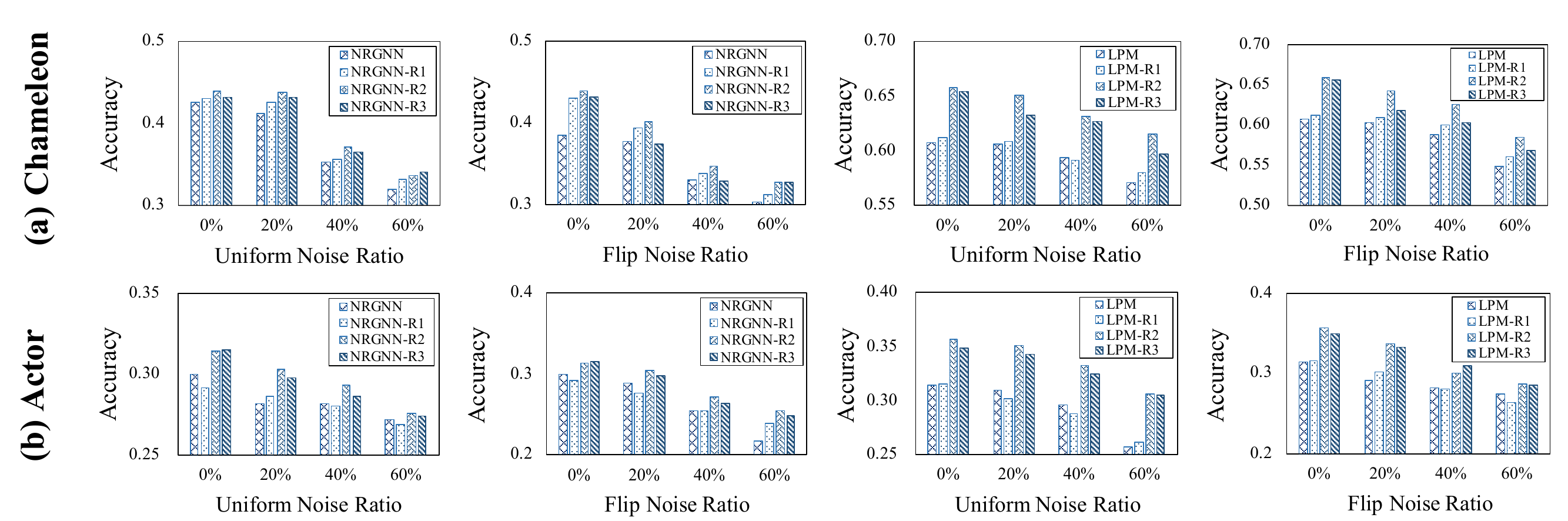}
   % \subfloat[Chameleon Dataset]{
   %  		\includegraphics[scale=0.4]{model_plus_chameleon.pdf}}\\
   %  \subfloat[Actor Dataset]{
   %  		\includegraphics[scale=0.4]{model_plus_film.pdf}}
  % \includegraphics[width=1\linewidth]{model_plus.pdf}
   \vspace{-2em}
  %\caption{The performance of SOTA methods for graph label noise with enhanced graph homophily on Chameleon and Actor.}
  \caption{The performance of LPM and NRGNN incorporated with three graph reconstruction modules}
  \label{fg:model_plus}
\end{figure*}

\subsection{Graph Reconstruction with Homophily}
\label{section:Graph_Reconstruction}
While both LPM and NRGNN reconstruct the original graph,
their methods have clear limitations.
For LPM,
it only adjusts the weights of raw edges in the graph, which cannot add new edges.
Further,
NRGNN employs link prediction as objective to predict new edges,
which takes existing edges 
in the graph as positive samples.
This restricts its wide applicability 
in graphs with heterophily.
Therefore,
we next introduce three common methods to convert heterophilous graphs into homophilous ones.

% Here we propose three different learning modules to refine heterophilous graphs, which are all built on the similarity of node features and graph topology. Thus, an edge is more likely to exist between two nodes with the same label if they share similar sub-graph structures.

First, the primary goal is to directly construct a similarity matrix that serves as the homophilous graph.
Specifically, 
we train a GNN $f(\cdot)$ to aggregate node features and topological structure, 
and obtain the hidden representation $h_i$ for node $v_i$. 
We then proceed to calculate the distance between nodes by utilizing the cosine similarity metric.
The existence of edge $Z_{ij}$ between nodes $v_i$ and $v_j$
is decided by a fixed threshold $\epsilon$:
\begin{equation} 
\label{eq:z_sim}
   Z_{ij} = 1 \text{  if distance}(h_i, h_j)< \epsilon; 0, \text{otherwise}
\end{equation}
This method can be considered as a direct extension to the one used by LPM. 
On the one hand,
both methods compute the similarity between nodes based on node embddings learned by GNN.
On the other hand, 
our method can further add new edges into the graph,
while LPM cannot.
% This method can be considered as a direct and natural extension to the one used by LPM.
% On the one hand, both methods compute the similarity between nodes based on the node embeddings learned by a GNN. 
% On the other hand, our method goes a step further by having the capability to add new edges into the graph dynamically, a feature that the LPM method lacks.

The second option is to learn a new graph adjacency matrix.
The representative method is GloGNN~\cite{li2022finding},
which learns the hidden representation $H$ for nodes and the new homophilous graph $Z$ together.
% inspired by the subspace clustering~\cite{liu2012robust} and GloGNN~\cite{li2022finding} to learn the hidden representation $H$ and the new homophilous graph $Z$ together. 
Specifically, in the $l$-th layer, it aims to obtain a coefficient matrix $Z^{(l)} \in \mathbb{R}^{n\times n}$ and node embeddings $H^{(l)} \in \mathbb{R}^{n\times d}$
satisfying  $H^{(l)} \approx  Z^{(l)}H^{(l)}$.
Each element $Z_{ij}^{(l)}$ represents how well node $v_j$ characterizes node $v_i$. The objective is formulated as:
\begin{equation}  \small
    \min\limits_{Z^{(l)}} \Vert H^{(l)} - (1-\gamma)Z^{(l)}H^{(l)} - \gamma H^{(0)}\Vert_F^2 +\beta_1\Vert Z^{(l)}\Vert_F^2+\beta_2\Vert Z^{(l)} - \sum_{k=1}^{K}\lambda_k\hat{A}^{k}\Vert_F^2,
    \label{eq:obj}
\end{equation}
where $\beta_1$ and $\beta_2$ are two hyper-parameters to control term importance. $H^{(0)}$ could be a simple transformation of initial node features $X$ and adjacency matrix $A$ (e.g., $\text{MLP}(\text{Concat}(X,A))$), and $\gamma$ is to balance the importance of initial and current embeddings.
$\hat{A}^{k}$ denotes the $k$-hop sub-graph, and $K$ is the pre-set maximum hop count.
%, $\lambda _{k} $ is introduced to show the importance of k-hop graphs.
Equation~\ref{eq:obj} has a closed-form solution:
\begin{equation} \small
\label{eq:z*}
\begin{aligned}
    Z^{(l)*} = &\left[(1-\gamma)H^{(l)}(H^{(l)})^\mathsf{T} + \beta_2 \sum_{k=1}^K \lambda_k \hat{A}^k - \gamma(1-\gamma)H^{(0)}(H^{(l)})^\mathsf{T} \right] \\
    \cdot&\left[(1-\gamma)^2H^{(l)}(H^{(l)})^\mathsf{T}+(\beta_1+\beta_2)I_n\right]^{-1}.
\end{aligned}
\end{equation}
% After $L$ layers,
Here, 
$Z^{(l)*}$ characterizes the connections between nodes.
After $L$ layers,
we use $Z^{(L)*}$ to denote homophilous graph structure.

In the third option, $Z^{(L)*}$ can be further adjusted to be
% is to apply the
symmetric and non-negative.
% constraint of $Z^{(l)*}$, 
% to make sure it is an adjacent matrix. 
Formally, we add the following formula after the calculation of Equation~\ref{eq:z*}, and still optimize the objective in Equation~\ref{eq:obj}:
%To make sure that $Z^{(l)*}$ is asymmetric and non-negative, we apply a common fix~\cite{lu2012robust,li2018rosc} to compute
\begin{equation}\small
\label{eq:z_hat}
\hat{Z}^{(L)} = \text{ReLU}(\frac{1}{2}(Z^{(L)*}+(Z^{(L)*})^\mathsf{T}).
\end{equation}

% The third option is a relaxed version of the second one. In this approach, we eliminate the non-negativity constraint, allowing for the possibility that homophilous graphs contain negative edges to represent the non-homophily. Therefore, the computation of $\tilde{Z}^{(l)}$ is adjusted:
% \begin{equation}
% \label{eq:Z_tilde}
% \tilde{Z}^{(l)} = \frac{1}{2}(Z^{(l)*}+(Z^{(l)*})^\mathsf{T}).
% \end{equation}

% \begin{figure*}
%     \centering
%    \includegraphics[scale=0.36]{homo_noise_film_flip.pdf}
%     % \subfloat[Uniform noise]{
%     % 		\includegraphics[scale=0.36]{homo_noise_film_uniform.pdf}}\\
%     % \subfloat[Flip noise]{		\includegraphics[scale=0.36]{homo_noise_film_flip.pdf}}
%     \caption{The impact of homophily on graph label noise on Actor with flip noise.}
%     \label{fg:homo_noise_film}
% \end{figure*}

\subsection{Empirical Analysis}
%We perform empirical analysis on graph homophily and graph label noise, discovering their significant patterns.
% We analyze the impact of graph homophily on graph label noise across various methods on real-world datasets.
% Furthermore, we integrate the three reconstruction modules with existing approaches, aiming to enhance their performance in heterophilous graphs. 
% Finally, we conclude with two statements.

We next investigate the relationship between graph homophily and graph label noise using two types of empirical experiments. In the first experiment, we directly enhance the homophily level by converting heterophilous edges into homophilous ones 
based on the ground-truth labels, and then run existing methods to compare their performance.
In the second experiment, we integrate the three homophilous graph reconstruction modules with existing approaches and evaluate them in heterophilous datasets.
%\noindent$\bullet$\textbf{Graph Homophily Mitigates Graph Label Noise.}
\subsubsection{Graph Homophily Mitigates the Effect of Graph Label Noise.}
We modified the edge homophily level and label noise level on two real-world heterophilous datasets, \emph{Chameleon} and \emph{Actor}. In order to maintain the same sparsity of the dataset, we randomly replace a heterophilous edge in the original graph with a homophilous one.
We conducted experiments on four methods, 
including two classical methods: LP and GCN; and two SOTA methods for graph label noise: NRGNN and LPM.
Figure~\ref{fg:homo_noise_flip} shows the performance of all the methods under different levels of flip label noise. 
The results for uniform label noise are given in Figure~\ref{fg:homo_noise_uniform} in Appendix~\ref{sec:homo_noise_uniform}.
%Every method was repeated 10 times over different random splits and we report the mean classification accuracy on the test set.

From the figures,
we observe that: 
(1) 
For all the methods and noise rates, 
the classification accuracy consistently improves with the increase of graph homophily on both Chameleon and Actor datasets.
Surprisingly, 
when graph homophily gets larger,
even under a high noise rate (e.g., 0.6 for LP, NRGNN and LPM, 0.4 for GCN),
these methods
% \li{most methods} 
can still achieve better performance 
% in a homophilous graph
% (edge homophily > 0.5) with a high noise rate ($\sim$ 0.6), 
than their corresponding results in the original heterophilous graph without label noise. 
This indicates that graph homophily can effectively mitigate the negative impact of label noise.
% since a node representation depends not only on its own feature but also on neighbor nodes during the learning process.
%Therefore, the higher the degree of graph homophily, the stronger the mitigation ability against label noise.
(2) When the noise rate is 0.6,
GCN performs poorly,
while
LP can still achieve the best results among all the methods in some cases.
This shows that 
although both GCN and LP
apply the smoothing operation on node features and labels, respectively,
GCN is adversely affected by the coupling of node features and learnable transformation matrix, which is also pointed out in~\cite{APPNP}.
% failed when the label noise reached a high level (=0.6), and it cannot improve the classification accuracy as the homophily level increased. Interestingly, LP achieved comparable, and in some cases, even superior performance. 
This insight also demonstrates that LP excels at correcting noisy labels through label smoothing under various conditions of graph homophily.
We will give theoretical proofs in Sec.~\ref{sec:theo} later.
% (2) Compared to the two GNN models, 
% GCN and H2GCN, 
% the improvements in LPM and NRGNN, 
% which are specifically designed for graph label noise, 
% are more pronounced under high levels of label noise.
% % This indicates the necessity of dedicated approaches for handling graph label noise.
% While graph homophily provides GNNs with some robustness against label noise,
% the results indicate the necessity of special designed approaches for graph label noise.
% (3) 
% % The results indicate that the LP algorithm achieves performance comparable to other methods as the graph homophily degree increases in the presence of both two types of label noise. 
% In experiments, 
% LP algorithms achieved comparable, and in some cases, even superior performance compared to other methods. This insight led us to believe that LP algorithms can excel at correcting noisy labels through label smoothing principles under conditions of graph homophily.
% % This underscores the effectiveness of the LP for label correction.
To sum up, all the results empirically show that graph homophily plays a significant role in denoising labels.
% effect on noisy labels.
%whether in terms of its denoising effect on noisy labels, the robustness of GNNs, or the help on methods for graph label noise.

% \subsubsection{Enhancing Current Approaches}
\subsubsection{Integration of Homophilous Graph Reconstruction and Existing Approaches.}
% Homophily could inhibit graph label noise by label smoothness.

In the previous section, we show the importance of graph homophily. 
However, the experiments use ground-truth labels to change the level of graph homophily, 
which is not practical. 
Instead, we adapt three kinds of reconstruction modules (denoted as \textbf{R1}, \textbf{R2} and \textbf{R3}) as introduced in Sec.~\ref{section:Graph_Reconstruction} to LPM and NRGNN, and evaluate their performance in two heterophilous datasets under different noise ratios. 
% We deduce that designing to enhance graph homophily can improve the performance of current approaches for graph label noise on heterophilous graphs.
% Specifically, without altering their original algorithms, we introduce three different modules to reconstruct a new adjacency matrix in their algorithms individually.
%We validate our results on the heterophilous graphs Chameleon and Actor.
%In detail, we applied three kinds of reconstruction modules (denoted as \textbf{R1}, \textbf{R2} and \textbf{R3}) to LPM and NRGNN, and evaluated these variants in the dataset Chameleon and Actor under different noise ratio. 
In detail, we directly replace the computation of $A'$, and keep $Y'$ and $Y_\text{final}$ unchanged for LPM and NRGNN in Equation~\ref{eq:current_three_func}. 
For different reconstruction modules (\textbf{R1}, \textbf{R2} and \textbf{R3}),
$A'$ is replaced by $Z$ in Equation~\ref{eq:z_sim}, 
$Z^{(L)*}$ in Equation~\ref{eq:z*},
or $\hat{Z}^{(L)}$ in Equation~\ref{eq:z_hat}.
Figure~\ref{fg:model_plus} summarizes the results of all the variants.

From the figures,
we see that  
(1) 
The replacement of homophilous graph reconstruction modules can generally improve the performance of LPM and NRGNN in heterophilous datasets.
(2)
{The reconstruction module \textbf{R1}}, which is based on cosine similarity, only considers node representations learned by the GNN model, 
and employs a hard threshold to determine the existence of edges. 
It heavily relies on the selected GNN and the pre-set threshold. Consequently, it
cannot provide consistent improvement for both LPM and NRGNN.
% is unable to learn an effective homophilous graph, thus providing limited assistance to LPM and NRGNN.
(3)
The reconstruction modules \textbf{R2} and \textbf{R3}
can clearly enhance the performance of LPM and NRGNN.
This is because they directly learn the graph structure. 
% both inspired by subspace clustering, 
% achieved better performance as they capture global homophily for each node in the graph.
Interestingly, the overall performance of \textbf{R2} surpasses that of \textbf{R3}, suggesting that
$Z$ derived in a closed-form solution
can better capture 
the true relations between nodes.
% the symmetric and non-negative constraints may, in fact, lead to a decline in performance.
%Compared to NRGNN, LPM is more flexible in applying the reconstructed graph, because it allows for negative values in the adjacency matrix. This flexibility allows the reconstruction design to provide better assistance in performance improvement.
To summarize, 
incorporating graph reconstruction with homophily can effectively enhance existing approaches for graph noise labels. Among the tested modules, the second reconstruction module computed by Equation~\ref{eq:z*} demonstrates the best overall performance.
We will employ it in our following experiments.

% \section{\ours: a Simple yet Effective Method}
\section{\ours: Graph Reconstruction and Noisy Label Rectification by Label Propagation}
Based on the above empirical analysis, 
we have identified two key findings.
First, 
graph homophily can mitigate the effect of graph label noise. 
Second, 
% we found that 
label propagation is a simple yet effective approach
% is one of the simplest yet effective algorithms 
to rectify incorrect labels when graph homophily is high.
% when the edge homophily level is high, surpassing SOTA NRGNN and LPM.
% Furthermore, the second reconstruction module exhibits the best performance. 
% Therefore, 
These findings inspire us to utilize 
% the most effective module of 
homophily graph reconstruction and label propagation, 
and propose a new approach \ours against arbitrary heterophily and label noise. 
% However, this simple approach still requires refinement in various aspects, including enhancing effectiveness and efficiency, as well as providing theoretical verification. We will discuss them in the following.

%needs to be refined from different aspects, such as effectiveness and efficiency improvement, and theoretical verification.  

%Apart from the modules for homophilous graph reconstruction and LP for label correction, \ours is designed as an iterative process. In each iteration, we filter corrected labels with high confidence and add them to the training set. These new labels can be used in the next iteration to reconstruct the graph and refine other noisy labels. After multiple iterations, \ours yields accurate predictions.

%Intuitively, the graph reconstruction module provides a great homophilous graph where the new edges both include the node feature and graph topology information. Then we give the theoretical insights to demonstrate why LP is enough for label correction. 

%The \ours consists of three modules: a GNN for graph reconstruction with homophily and prediction, an LP module for correcting noisy labels, and a sample selection module for filtering trustworthy correction labels to be added to the training set for training the GNN model. 
%After multiple iterations, the algorithm yields accurate predictions.
\begin{algorithm}[h]
\small
\caption{\ours}
\label{ag}
    \LinesNumbered 
    \KwIn{
    Graph $\mathcal{G} = ({\mathcal{V},A, X)}$, $\mathcal{V}=
    \mathcal{C}\cup \mathcal{N}\cup \mathcal{U}$, 
    $Y_{\mathcal{C}}$, $Y_{\mathcal{N}}$,
    multi-round epoch $T$, selection ratio $\varepsilon$.}
    \KwOut{The label of unlabeled nodes $Y_{\mathcal{U}}$.}
    \For{$t=0,1,...,T-1$}{
        Generate $Z^{(L)*}$ on $\mathcal{C}$ by Eq.~\ref{eq:z*} \\
        Train a GNN on $Z^{(L)*}$ and $\mathcal{C}$ and calculate the predicted labels $Y_{\mathcal{P}}$ for all the nodes \\
        Utilize the label propagation to rectify noisy labels by Eq.~\ref{eq:f*} \\
        select $\varepsilon|\mathcal{N}|$ nodes from $\mathcal{N}$ based on the confidence of $F^{*}$, and then transfer them from $\mathcal{N}$ to $\mathcal{C}$\\
    }
    Train a GNN on the final clean set $\mathcal{C}$, and obtain the predicted labels $Y_{\mathcal{U}}$ \\
    \Return $Y_{\mathcal{U}}$
\end{algorithm}

\subsection{Basic version}
Assume we have a graph $\mathcal{G}$ with arbitrary heterophily, and a set of clean labels $Y_{\mathcal{C}}$ and noisy labels $Y_{\mathcal{N}}$. Our goal is to obtain the correct labels for both noisy and unlabeled nodes. As previously designed, we first reconstruct the graph structure explicitly and compute the matrix $Z^{(L)*}$ by Equation~\ref{eq:z*}. Simultaneously, the node representation $H^{(L)}$ is obtained. 
The training of the reconstruction module relies on the clean labels $Y_{\mathcal{C}}$, which allows us to establish a mapping from $H^{(L)}$ to $Y$ to generate the predicted labels $Y_{\mathcal{P}}$ for all unlabeled nodes. 
While 
$Y_{\mathcal{P}}$ is noisy,
it still contains 
rich correct labels that are useful in label propagation.
% It helps us augment the number of ``noisy'' labels, which is beneficial for label propagation.

%Based on the assumption of label smoothness that two linked nodes tend to share the same label,

Label propagation~\cite{lp, zhou2003learning} is a graph-based method to propagate label information across connected nodes. The fundamental assumption of LP is label smoothness, which posits that two connected nodes tend to share the same label. Thus, propagating existing labeled data (including clean/noisy/predicted labels) throughout the graph can effectively help us rectify noisy labels and assign appropriate labels to previously unlabeled nodes. 

Here we set the similarity graph $S = {Z}^{(L)*}$, and $F \in \mathbb{R}^{n\times c}$ as the soft label matrix for nodes. 
When $t = 0$, 
we establish the initial label matrix 
$F(0)= \alpha_{2}Y_{\mathcal{C}} + \alpha_{3}Y_{\mathcal{N}} + \alpha_{4}Y_{\mathcal{P}}$.
% where $Y_{\mathcal{C}}$ is the one-hot encoded clean label vectors,
% $Y_{\mathcal{N}}$ is the one-hot encoded noisy label vectors, and
% $Y_{\mathcal{P}}$ is the predicted label vectors.
Note that
in addition to the clean labels $Y_\mathcal{C}$,
both $Y_\mathcal{N}$ and $Y_\mathcal{P}$ could be noise-corrupted.
Hence, we introduce three hyper-parameters $\alpha_2$, $\alpha_3$ and $\alpha_4$ to control the reliability of these initial labels.
After that,
in the $t$-th iteration,  
LP can be formulated as: 
\begin{equation} 
    \label{eq:lps}
    F(t+1) = \alpha_{1} SF(t)+     \alpha_{2}Y_{\mathcal{C}} + \alpha_{3}Y_{\mathcal{N}} + \alpha_{4}Y_{\mathcal{P}},
\end{equation}
where $\sum_{j=1}^4\alpha_j=1$ and each $\alpha_j \in [0,1]$.
With iterations in Equation~\ref{eq:lps},
we derive: 
\begin{equation} \small
    F(t)=(\alpha_{1}S)^{t-1}F(0)
    + \alpha_{2}\sum_{i=0}^{t-1}(\alpha_{1}S)^{i}Y_{\mathcal{C}}
    + \alpha_{3}\sum_{i=0}^{t-1}(\alpha_{1}S)^{i}Y_{\mathcal{N}}
    + \alpha_{4}\sum_{i=0}^{t-1}(\alpha_{1}S)^{i}Y_{\mathcal{P}}.
\end{equation}
% F(0)+(1-\alpha_{1})\sum_{i=0}^{t-1} (\alpha_{1}Z^{*})^{i}F(0)
Hence, we can also have a closed-form solution:
\begin{equation}
\label{eq:close}
    F^{*}=\lim_{t \to \infty} F(t) = (I - \alpha _{1}S )^{-1} (\alpha_{2}Y_{\mathcal{C}}
    +\alpha_{3}Y_{\mathcal{N}}
    +\alpha_{4}Y_{\mathcal{P}}).
\end{equation}

\subsection{Efficiency Improvement}

Directly calculating $ F^{*}$ by Equation~\ref{eq:close} is computationally infeasible due to the cubic time complexity involved in computing
$ Z^{(L)*} $ in Equation~\ref{eq:z*} and the inverse term $(I - \alpha _{1} S )^{-1}$.
However, even quadratic time complexity is still high for large graph datasets. Therefore, we aim to accelerate the whole computation to achieve linear time complexity.

First of all, we utilize the Woodbury formula~\cite{1950Inverting} to replace the inverse and transform Equation~\ref{eq:z*} into:
\begin{equation} \scriptsize
\label{eq:wood}
\begin{aligned}
Z^{(l)*} = &\left [ (1-\gamma )H^{(l)} (H^{(l)})^{\mathsf{T}} + \beta_{2} \sum_{k=1}^{K}\lambda _{k}\hat{A}^{k}
    -\gamma (1-\gamma ) H^{(0)} (H^{(l)})^{\mathsf{T}}  \right ] \cdot  \\
 &\left [ \frac{1}{\beta _{1}+ \beta _{2}} I_{n}  - \frac{1}{(\beta _{1}+ \beta _{2})^{2} }
    H^{(l)}\left [ \frac{1}{(1-\gamma)^{2}}I_{d} +\frac{1}{\beta _{1}+ \beta _{2}} 
    (H^{(l)})^{\mathsf{T}}H^{(l)}\right ]^{-1}  
    (H^{(l)})^{\mathsf{T}} \right ]
\end{aligned}
\end{equation}
After that, we use first-order Taylor expansion and
derive
\begin{equation}
\label{eq:f}
    F^{*} = (I - \alpha _{1} S )^{-1}F(0)\approx (I + \alpha _{1} S)F(0).
\end{equation}
% where $F(0)= \alpha_{2}Y_{\mathcal{C}} + \alpha_{3}Y_{\mathcal{N}} + \alpha_{4}Y_{\mathcal{P}} $.
The inverse  term in Equation~\ref{eq:wood} is computed on a matrix in $\mathbb{R}^{d\times d}$,
whose 
time complexity is only $O(d^3)$.
In this way,
the overall time complexity of computing $Z^{(L)*}$ is $O(n^2)$, leading to a quadratic time complexity of calculating $F^*$ in Equation~\ref{eq:f}.

Next, we combine Equations~\ref{eq:wood} and~\ref{eq:f}:
\begin{equation}\scriptsize
\label{eq:f*} 
\begin{split}
    F^{*} = F(0) +\alpha _{1} \left [  (1-\gamma )H^{(L)}(H^{(L)})^{\mathsf{T}} +\beta _{2}\sum_{k=1}^{K}\lambda _{k}
\hat{A}^{k} -\gamma(1-\gamma )H^{(0)}(H^{(L)})^{\mathsf{T}}\right ] Q, \\ 
\end{split}
\end{equation}
where 
\begin{equation}\small
\begin{aligned}
Q = \frac{1}{\beta _{1}+ \beta _{2}} F(0) & - \frac{1}{(\beta _{1}+ \beta _{2})^{2} }  H^{(L)} \\
& \cdot [ \frac{1}{(1-\gamma)^{2}}I_{d} + \frac{1}{\beta _{1}+ \beta _{2}} 
(H^{(L)})^{\mathsf{T} }H^{(L)} ]^{-1}  
(H^{(L)})^{\mathsf{T} } F(0). 
\end{aligned}
\end{equation}
In this way, we avoid computing $Z^{(l)*}$ and $(I + \alpha _{1} S )$
explicitly, and it can be further accelerated by matrix multiplication reordering.
We first calculate $Q$
in a right-to-left manner.
% Specifically,
% In particular,
% the inverse is computed on a matrix in $\mathbb{R}^{c\times c}$,
% whose 
Due to the 
time complexity $O(d^3)$ of computing the inverse term,
the overall time complexity of calculating $Q$ is $O(ncd + d^3)$, where $cd \ll n$ in the large datasets.
After that,
we calculate $F^*$ in Equation~\ref{eq:f*} in a similar right-to-left manner.
For example,
for the term $H^{(0)}(H^{(L)})^\mathsf{T} Q$,
we first compute $(H^{(L)})^\mathsf{T} Q$.
Then we left-multiply the result with $H^{(0)}$,
leading to a time complexity of $O(ncd)$.
When computing the term $\sum_{k-1}^K\lambda_k\hat{A}^kQ$,
we first calculate $\hat{A}Q$.
Due to the sparsity of $\hat{A}$,
the time complexity is $O(pcn)$,
where $p$ is the average number of non-zero entries in each row of $\hat{A}$.
The total time complexity for $\sum_{k-1}^K\lambda_k\hat{A}^kQ$ is $O(Kpcn)$, where $Kpc$ is a coefficient.
% After that,
% we can sequentially left-multiply the result with the sparse matrix $\hat{A}$ to derive 
% $\hat{A}^2Q, \cdots, \hat{A}^KQ$.
Finally,
%the overall time complexity for computing $F^{*}$ in Equation~\ref{eq:f*} is $O(k_2n)$,where $k_2$ is a coefficient and $k_2 \ll n$. 
we approximate the optimal label propagation results in a linear time.

\subsection{Effectiveness Enhancement}
\label{sec:sample_select}

To further enhance the effectiveness of \ours, we employ a multi-round selection strategy. 
Specifically, $F^*$ is obtained after the computation of graph reconstruction and label propagation. We then calculate the confidence based on $F^*$ for noisy nodes, select the rectified nodes with the highest confidence and add them to the clean label set $Y_\mathcal{C}$. 
In each time, $\varepsilon * \vert\mathcal{N}\vert$ nodes are chosen, where $\varepsilon$ is the pre-set ratio. The updated clean label set can be leveraged in the next round to reconstruct the graph and refine the remaining nodes.
Finally, we obtain an augmented clean label set, upon which we train a GNN classifier to obtain the node classification result.
The pseudo-code of \ours\ is provided 
% in Appendix~\ref{sec:pseudocodes}.
in Algorithm~\ref{ag}.

% We summarize the pseudo-code of \ours in Algorithm~\ref{ag}. 
The input is a graph with the topology $A$ and node features $X$, and we divide the nodes into three sets $\mathcal{C}$, $\mathcal{N}$ and $\mathcal{U}$. Here, $\mathcal{C}$ represents clean labeled nodes, $\mathcal{N}$ denotes noisy labeled nodes, and $\mathcal{U}$ represents unlabeled nodes. We have the labels $Y_{\mathcal{C}}$ and $Y_{\mathcal{N}}$, and we aim to predict $Y_{\mathcal{U}}$. The whole process is a multi-round training to select the noisy nodes with corrected labels into the clean node set $\mathcal{C}$. In each round, we first reconstruct the graph with homophily based on the current clean labels, and train a GNN to obtain the predicted labels for all the nodes as extra information. Next, we employ label propagation to rectify noisy labels.
Given the selection ratio $\varepsilon$, we select a portion of the current noisy nodes based on the confidence of $F^{*}$. Lastly, we transfer the selected noisy nodes with their corrected labels from $\mathcal{N}$ to $\mathcal{C}$ and initiate a new round. \ours\ typically runs for 10-20 iterations. After we end all the rounds, we obtain the final clean set $\mathcal{C}$ to train a GNN and predict the labels for unlabeled nodes.

% After label propagation, 
% we select the nodes in $\mathcal{N}$ 
% with the highest confidence
% and add them to the clean label set $Y_\mathcal{C}$.
% After that,
% we remove these corresponding nodes from $\mathcal{N}$.
% Their labels are corrected as the result of label propagation.
% In each time,
% we select $\varepsilon * \vert\mathcal{N}\vert$ samples,
% where $\varepsilon$ is the pre-set ratio.
% We iteratively perform the process described in Section~\ref{section:prediction}, \ref{sec:label_propafate} and \ref{sec:sample_select} to increase the clean label set, and further improve the classification accuracy.
% The pseudo-code is provided in Appendix~\ref{sec:pseudocodes}.

\subsection{Theoretical Verification}
\label{sec:theo}
In this section, we theoretically analyze the denoising effect of label propagation.
For clarity, we first assume the clean label is corrupted by a noise transition matrix $\bm{T} \in [0,1]^{c \times c}$, where $\bm{T}_{ij}=\mathbb{P}(\tilde{Y} = C_j|Y = C_i)$ is the probability of the label $C_i$ being corrupted into the label $C_j$. For simplicity and following~\cite{wei2021smooth}, we assume the classification is binary and the noise transition matrix is 
$\bm{T} = \begin{bmatrix}
1-e & e \\
e & 1-e
\end{bmatrix}$. After label propagation, the propagated label of the node $v_i$ becomes a random variable 
\begin{align}\label{eq:lp} \small
%\textstyle
\hat{Y}_{i} = (1-\alpha) \tilde{Y}_{i} + \frac{\alpha }{d} \sum_{v_j \in \mathcal{N}_i} \tilde{Y}_{j}. 
\end{align}
which is decided by its own noisy label $\tilde{Y}_{i}$ and its $d$ neighbors' noisy labels $\tilde{Y}_{j}$. We establish the following theorem for the propagated label $\hat{Y}_{i}$, 
and the detailed proof can be found in Appendix \ref{app:prof_denoise}.

\begin{theorem}\label{thm:denoise} (Label Propagation and Denoising) Suppose the label noise is generated by $\bm{T}$ and the label propagation follows Equation~\ref{eq:lp}. For a specific node $i$, we further assume the node has $d$ neighbors, and its neighbor nodes have the probability $p$ to have the same true label with node $i$, i.e., $\mathbb{P}[Y_i = Y_j] = p$. After one-round label propagation, the gap between the propagated label and the ground-true label is:
\begin{equation} \small
\begin{aligned}\label{eq:denoising_1} 
    \mathbb{E}(Y-\hat{Y})^2 &= \mathbb{P}(Y=0)\mathbb{E}(\hat{Y}|Y=0)^2  \\
    & + \mathbb{P}(Y=1)\left(\mathbb{E}(\hat{Y}| Y =1) - 1 \right)^2+ \text{Var}(\hat{Y}),
\end{aligned}
\end{equation}
where we have
\begin{subequations} 
%\begin{equation}
\begin{align}
     &\mathbb{E}(\hat{Y}|Y=0) = (1-\alpha)e+\alpha[pe+(1-p)(1-e)],\label{eq:e0} \\
    &\mathbb{E}(\hat{Y}|Y=1) = (1-\alpha)(1-e)+\alpha[p(1-e)+(1-p)e], \label{eq:e1} \\
    &\text{Var}(\hat{Y}) =  (1-\alpha)^2e(1-e) \\
    & \quad \quad \quad \quad +\frac{\alpha^2 }{d} [pe+(1-p)(1-e)] [1-pe-(1-p)(1-e)]. \label{eq:var}
\end{align}
%\end{equation}
\end{subequations}
\end{theorem}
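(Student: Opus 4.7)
My overall plan is a bias--variance style decomposition of the mean-squared gap $\mathbb{E}(Y-\hat{Y})^2$, conditioning first on the ground-truth label $Y$ of the target node. By the tower property, $\mathbb{E}(Y-\hat{Y})^2 = \mathbb{P}(Y=0)\,\mathbb{E}[(Y-\hat{Y})^2\mid Y=0] + \mathbb{P}(Y=1)\,\mathbb{E}[(Y-\hat{Y})^2\mid Y=1]$. Inside each conditional expectation I would apply the identity $\mathbb{E}[(Y-\hat{Y})^2\mid Y=y] = (y-\mathbb{E}[\hat{Y}\mid Y=y])^2 + \text{Var}(\hat{Y}\mid Y=y)$. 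If I can then show the two conditional variances coincide, they combine into a single term that I will write as $\text{Var}(\hat{Y})$, reproducing exactly the structure in Equation~\ref{eq:denoising_1}.

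Next I would evaluate the two conditional expectations by linearity. Since $\hat{Y}_i = (1-\alpha)\tilde{Y}_i + (\alpha/d)\sum_{v_j\in\mathcal{N}_i}\tilde{Y}_j$, the self-term uses $\mathbb{E}[\tilde{Y}_i\mid Y_i=0] = e$ and $\mathbb{E}[\tilde{Y}_i\mid Y_i=1] = 1-e$ directly from $\bm{T}$. For each neighbor $v_j$, the assumption $\mathbb{P}(Y_j=Y_i)=p$ together with the noise model yields $\mathbb{E}[\tilde{Y}_j\mid Y_i=0] = pe + (1-p)(1-e)$ and $\mathbb{E}[\tilde{Y}_j\mid Y_i=1] = p(1-e) + (1-p)e$. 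Summing the $d$ identical neighbor contributions and dividing by $d$ removes the $d$-dependence, immediately reproducing Equations~\ref{eq:e0} and~\ref{eq:e1}.

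For the conditional variance, I would use that, conditional on $Y_i$, the noise corruption of $\tilde{Y}_i$ and the neighbor labels $\tilde{Y}_j$ are jointly independent across $j$, so the variance decouples as $\text{Var}(\hat{Y}\mid Y_i) = (1-\alpha)^2\text{Var}(\tilde{Y}_i\mid Y_i) + (\alpha^2/d^2)\sum_j \text{Var}(\tilde{Y}_j\mid Y_i)$. The self-term is a Bernoulli$(e)$ with variance $e(1-e)$; each neighbor term is Bernoulli with success probability $q_0 = pe+(1-p)(1-e)$ when $Y_i=0$ and $q_1 = p(1-e)+(1-p)e$ when $Y_i=1$. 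A one-line check gives $q_0+q_1=1$, hence $q_0(1-q_0)=q_1(1-q_1)$, so the conditional variance takes the same value under either outcome of $Y_i$ and matches Equation~\ref{eq:var}.

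The only step requiring real care is the independence bookkeeping that justifies this variance decoupling across neighbors: the noise process $Y\mapsto\tilde{Y}$ must act independently across nodes, and the neighbors' true labels must be exchangeable with $\mathbb{P}(Y_j=Y_i)=p$ and independent of each other given $Y_i$. These are the natural reading of the theorem's hypotheses, and once they are invoked the rest reduces to routine substitution and the symmetry $q_0+q_1=1$, which is what collapses $\text{Var}(\hat{Y}\mid Y=0)$ and $\text{Var}(\hat{Y}\mid Y=1)$ into the single unconditional $\text{Var}(\hat{Y})$ appearing in the theorem statement.
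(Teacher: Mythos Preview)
Your proof is correct and follows the same bias--variance decomposition as the paper, but it is more streamlined in the moment computations. The paper first establishes, via a page-long combinatorial manipulation, that the neighbor sum $\sum_{j}\tilde{Y}_j$ conditional on $Y_i$ has the full binomial law $B(d,q)$, and only afterwards reads off its mean and variance. You bypass this entirely by computing the Bernoulli parameter of each individual $\tilde{Y}_j\mid Y_i$ directly from the law of total probability and then invoking conditional independence to sum; this isolates exactly the two moments the theorem needs and nothing more. Your explicit observation that $q_0+q_1=1$ forces $q_0(1-q_0)=q_1(1-q_1)$ also makes transparent why the two conditional variances coincide and can be merged into the single term written as $\text{Var}(\hat{Y})$---a step the paper's proof uses in its last line but does not justify.
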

%\begin{rem} (Interpretations of Theorem \ref{thm:denoise}) 
(1) \textbf{Heterophily and the impact of $p$:} In Equation~\ref{eq:denoising_1}, $p$ plays the central role as a larger $p$ can reduce all three terms in the bound. The value of $p$ is related to the heterophily of \emph{a specific node} and a more heterophilous node tends to have a smaller $p$. This is our major motivation to construct $Z^{(L)*}$ in the algorithm, which attempts to connect the nodes from the same class. 
%Notice that the homophilous graph also contains many heterophilous nodes. Therefore, even for homophilous graph, the proposed algorithm is better than the baselines, which is shown in Section~\ref{sec:performance}. 

(2) \textbf{Label denoising effect:} We consider two special cases to see the label denoising by propagation. When we set $\alpha = 0$ and do not propagate the labels, $\mathbb{E}(Y-\hat{Y})^2 = \mathbb{E}(Y-\tilde{Y})^2 =e$. When we set $p \to 1$ where all the neighbors have the same true labels, we have $\mathbb{E}(Y-\hat{Y})^2 \approx e^2 + \frac{1}{d}e(1-e)$, which is strictly less than $e$. It demonstrates the importance of neighbors. Besides, when we have more neighbors, the distance becomes smaller. It is the reason that $Z^{(L)*}$ is dense so each node could have more neighbors. 

(3) \textbf{The impact of clean and predicted labels:} The bound in Equation~\ref{eq:denoising_1} monotonically decreases with $e$ and therefore having clean nodes can substantially reduce the bound. When the predicted labels are present, we have a larger number of neighbors for propagation, which also increases the value $d$.% and decreases the error bound. 

\begin{table*}[h]
\setlength{\tabcolsep}{2.5pt}
  %\scriptsize
  \centering
  \caption{The classification accuracy (\%) over the methods on 10 datasets with flip noise ranging from 0\% to 60\%. We highlight the best score on each dataset in bold. OOM denotes the out-of-memory error.}
   \vspace{-1em}
  \label{flip}
  \resizebox{\linewidth}{!}{
  \begin{tabular}{ccccccccccccc}
    \toprule
    \multicolumn{1}{c|} {\textbf{Types}} & \textbf{Methods} &  \multicolumn{1}{c|}{\textbf{Noise}} &\textbf{Cora} & \textbf{Citeseer} &
    \textbf{Chameleon} & \textbf{Cornell} & \textbf{Wisconsin} &
    \textbf{Texas} & \textbf{Actor} &\textbf{Penn94} & \textbf{arXiv-year} & \textbf{snap-patents} \\ 
    \hline
    \multicolumn{1}{c|}{\multirow{12}{*}{
    GNNs
    }} & \multirow{4}{*}{GCN} &\multicolumn{1}{c|}{0\%}& 87.75 &76.81  & 56.95 & 61.89 &63.92  & 62.16 & 31.27 & 80.76 & 44.67  & 53.59\\
    \multicolumn{1}{c|}{} &  & \multicolumn{1}{c|}{20\%} & 86.08 & 75.48  & 56.27 & 60.00 & 62.94 & 61.89 & 30.63 & 78.99 & 40.93 & 49.49\\
    \multicolumn{1}{c|}{}  &  & \multicolumn{1}{c|}{40\%} & 81.30 & 70.99  & 53.81 & 60.27 & 61.17 & 61.35 & 26.38 & 75.84 & 40.20 & 41.84\\
    \multicolumn{1}{c|}{} &  & \multicolumn{1}{c|}{60\%} & 53.85 & 42.94 & 52.98 & 49.18  & 36.47 & 48.64 & 21.51 & 72.70 & 38.73 & 30.55\\ 
    \cline{2-13} 

    \multicolumn{1}{c|}{\multirow{12}{*}{}} & \multirow{4}{*}{GloGNN} &\multicolumn{1}{c|}{0\%}& 84.57 & 76.09  & 68.83 & 83.51 &  88.43& 84.05  & 38.55 &85.66 &52.74 &62.33 \\
    \multicolumn{1}{c|}{} &  & \multicolumn{1}{c|}{20\%} & 80.84 & 74.09  & 67.76 & 76.48 & 81.76 & 71.62  & \textbf{37.55} &76.97 &47.67 &54.72\\
    \multicolumn{1}{c|}{}  &  & \multicolumn{1}{c|}{40\%} &77.55  & 63.54 & \textbf{63.00 }& 70.27& 74.31& 61.08 & 33.64 &62.55 &36.93 &38.74\\
    \multicolumn{1}{c|}{} &  & \multicolumn{1}{c|}{60\%} &  75.10  & 40.17& 55.08 & 52.43 & 56.86 & 53.78 & 30.31 &51.79 &25.45 &26.37\\
    \cline{2-13}

    \multicolumn{1}{c|}{\multirow{12}{*}{}} & \multirow{4}{*}{H$_{2}$GCN} &\multicolumn{1}{c|}{0\%}& 87.87  & 77.11   &60.01    &  82.59 &  85.65 & 86.26 & 35.70  &79.02 &48.72 &57.21 \\
    \multicolumn{1}{c|}{} &  & \multicolumn{1}{c|}{20\%} & 83.51  & 74.80  &54.10 & 74.45 &71.96  &70.78  & 32.49 &74.08 &43.20 &52.06\\
    \multicolumn{1}{c|}{}  &  & \multicolumn{1}{c|}{40\%} & 82.00  & 71.42  &47.82 & 65.67& 70.60 & 67.83  &29.73 &67.42 &36.57 &46.23\\
    \multicolumn{1}{c|}{} &  & \multicolumn{1}{c|}{60\%} & 81.36   &  71.23 & 34.51 & 55.27 & 64.90 & 60.81 & 30.21 &54.47 &29.81 &38.57\\
    \hline
    
    \multicolumn{1}{c|}{\multirow{8}{*}{\makecell{Methods\\
   for \\Label \\Noise}}} & \multirow{4}{*}{Co-teaching+} &\multicolumn{1}{c|}{0\%} &  85.76& 76.10    & 72.52  & 70.54 &  72.49 &  67.91& 34.28 &86.90 &OOM &OOM\\
    \multicolumn{1}{c|}{} &  & \multicolumn{1}{c|}{20\%} &  68.79& 70.19  & 66.09 & 66.09& 61.92 & 64.70  & 30.16 &77.68 &OOM &OOM\\
    \multicolumn{1}{c|}{}  &  & \multicolumn{1}{c|}{40\%} & 55.34  & 47.87   & 41.03& 41.03& 57.88& 57.08  & 24.98&61.82 &OOM &OOM\\
    \multicolumn{1}{c|}{} &  & \multicolumn{1}{c|}{60\%} & 35.72   & 36.11  &28.57  & 28.57 & 39.60 & 38.56 &21.16  &50.18 &OOM &OOM\\
    \cline{2-13}

    \multicolumn{1}{c|}{\multirow{8}{*}{}} & \multirow{4}{*}{Backward} &\multicolumn{1}{c|}{0\%} &  84.88 & 77.06   & 71.00  & 72.70  & 81.37  & 84.31  & 23.90  &\textbf{87.00} &46.51 &58.54\\
    \multicolumn{1}{c|}{} &  & \multicolumn{1}{c|}{20\%} & 81.04 & 72.43  &67.69  &69.72  & 75.88 & 74.72 & 21.15 &80.84 &40.29 &56.30\\
    \multicolumn{1}{c|}{}  &  & \multicolumn{1}{c|}{40\%} & 71.28  & 62.03 & 61.73 & 68.91& 63.13 & 60.78  & 20.01 &70.00 &32.24 &43.61 \\
    \multicolumn{1}{c|}{} &  & \multicolumn{1}{c|}{60\%} & 56.70   & 48.37   & 53.15 & 51.35 & 45.49 &53.51  & 20.09 &51.57 &26.21 &31.57\\
    \hline

    \multicolumn{1}{c|}{\multirow{12}{*}{\makecell{Methods \\for\\ Graph\\ Label\\ Noise}}} & \multirow{4}{*}{NRGNN} &\multicolumn{1}{c|}{0\%} & 82.87 & 72.52   & 38.50 & 69.91 &  69.17 &  72.64 &  28.82  &68.31 &OOM &OOM\\
    \multicolumn{1}{c|}{} &  & \multicolumn{1}{c|}{20\%} & 82.30 & 72.47  & 37.83 & 61.70& 67.37 & 70.32& 28.75 &66.80 &OOM &OOM\\
    \multicolumn{1}{c|}{}  &  & \multicolumn{1}{c|}{40\%} &79.13  &67.42  & 33.09& 54.59 & 61.64 & 57.02  & 25.40 &63.59 &OOM &OOM  \\
    \multicolumn{1}{c|}{} &  & \multicolumn{1}{c|}{60\%} & 75.40  & 60.00 & 30.35 & 42.70 & 58.23 &54.59  &21.69 &53.14 &OOM &OOM\\
    \cline{2-13}

    \multicolumn{1}{c|}{\multirow{12}{*}{}} & \multirow{4}{*}{LPM} &\multicolumn{1}{c|}{0\%} & \textbf{89.74} & \textbf{78.77}  &  60.72 & 63.87  & 73.72  & 69.46  & 31.43  &76.10 &44.32 &56.76\\
    \multicolumn{1}{c|}{} &  & \multicolumn{1}{c|}{20\%} & 86.55  & 75.92 & 60.31&62.21 &72.35  & 68.10  & 29.11 &75.35 &42.03 &53.84\\
    \multicolumn{1}{c|}{}  &  & \multicolumn{1}{c|}{40\%} & 83.97 & 72.62 &58.77& 61.35 & 70.78 & 67.02  & 28.52 &71.45 &38.46 &48.36\\
    \multicolumn{1}{c|}{} &  & \multicolumn{1}{c|}{60\%} & 80.47  &68.72 & 54.75& 58.37& 64.11 & 62.16 & 27.51 &63.14 &32.11 &40.21\\
    \cline{2-13}

    \multicolumn{1}{c|}{\multirow{12}{*}{}} & \multirow{4}{*}{RTGNN} &\multicolumn{1}{c|}{0\%} & 86.08&	76.27&	45.88&	62.83&	67.64&	71.43&	29.55&	72.65&	OOM&	OOM\\
    \multicolumn{1}{c|}{} &  & \multicolumn{1}{c|}{20\%} & 85.24&	75.45&	39.03&	57.43&	60.78&	67.56&	28.25&	70.67&	OOM&	OOM\\
    \multicolumn{1}{c|}{}  &  & \multicolumn{1}{c|}{40\%} & 80.21&	69.85&	31.08&	47.97&	56.86&	60.13&	25.67&	60.48&	OOM&	OOM\\
    \multicolumn{1}{c|}{} &  & \multicolumn{1}{c|}{60\%} & 73.25&	61.29&	25.58&	31.75&	47.54&	50.21&	18.96&	53.98&	OOM&	OOM\\
    \cline{2-13}

    \multicolumn{1}{c|}{\multirow{12}{*}{}} & \multirow{4}{*}{ERASE} &\multicolumn{1}{c|}{0\%} & 87.32&	77.14&	50.15&	62.62&	65.52&	72.91&	30.16&	75.36&	43.27&	56.50\\
    \multicolumn{1}{c|}{} &  & \multicolumn{1}{c|}{20\%} & 86.51&	75.86&	42.53&	58.67&	58.46&	70.04&	29.85&	72.61&	40.51&	52.65\\
    \multicolumn{1}{c|}{}  &  & \multicolumn{1}{c|}{40\%} & 82.19&	70.15&	35.16&	50.11&	56.03&	65.55&	27.14&	64.67&	35.62&	46.33\\
    \multicolumn{1}{c|}{} &  & \multicolumn{1}{c|}{60\%} & 75.63&	61.22&	30.88&	33.84&	45.32&	57.47&	20.58&	58.26&	30.17&	38.04\\
    \cline{2-13}
    
    \multicolumn{1}{c|}{\multirow{12}{*}{}} & \multirow{4}{*}{\ours} &\multicolumn{1}{c|}{0\%} & 87.69 & 78.09  &\textbf{73.05}  &  \textbf{87.84}& \textbf{88.82} & \textbf{87.30} & \textbf{38.59} &86.78 &\textbf{53.80} & \textbf{63.04}\\
    \multicolumn{1}{c|}{} &  & \multicolumn{1}{c|}{20\%} &\textbf{86.85} & \textbf{76.30} & \textbf{68.38}& \textbf{84.32}&\textbf{86.86} &  \textbf{83.24}& 37.37 &\textbf{82.69} &\textbf{50.11} &\textbf{58.59}\\
    \multicolumn{1}{c|}{}  &  & \multicolumn{1}{c|}{40\%} & \textbf{85.00} & \textbf{74.41} & 61.23& \textbf{77.84}&\textbf{83.14}&  \textbf{77.02}& \textbf{35.86} &\textbf{79.05} &\textbf{46.19} & \textbf{55.90}\\
    \multicolumn{1}{c|}{} &  & \multicolumn{1}{c|}{60\%} & \textbf{82.53}  & \textbf{72.73} & \textbf{57.46}& \textbf{71.35}& \textbf{75.68}& \textbf{69.46}& \textbf{35.08}&\textbf{77.09} &\textbf{46.11} &\textbf{52.71}\\
    \bottomrule
\end{tabular}
}
% \vspace{-2em}
\end{table*}

In the following theorem, we analyze the generalization error of the final GNN classifier $\hat{f}$. The proof is provided in Appendix \ref{app:prof_gen}.
\begin{theorem}\label{thm:gen} (Generalization Error and Oracle Inequality)
    Denote the node feature $X$ sampled from the distribution $D$, the graph topology as $A$, the set of training nodes as $S_n$, the graph neural network as $f(\cdot, \cdot)$, and the learned GNN classifier $\hat{f} = \inf_f \sum_{X\in S_n} (\hat{Y} - f(X, A))$. Suppose that the propagated label concentrated on its mean, i.e., with probability at least $\frac{\delta}{2}$, $||\hat{Y} - Y| - \mathbb{E}|\hat{Y} - Y|| \leq \epsilon_1$. We further assume the generalization error is bounded with respect to the propagated labels, i.e., with probability at least $\frac{\delta}{2}$,
    \begin{align*} \small
        \mathbb{E}_{X \sim D} |\hat{Y} - \hat{f}(X,A)| - \inf_f \mathbb{E}_{X \sim D} |\hat{Y} - f(X,A)| \leq \epsilon_2.
    \end{align*}
    Then we obtain the generalization error bound for training with noisy labels and test on the clean labels, i.e., with probability at least $\delta$, the generalization error trained with propagated labels is given by
    \begin{align*} \small
        \mathbb{E}_{X \sim D} |Y - \hat{f}(X,A)| \leq  \inf_f \mathbb{E}_{X \sim D} |Y - f(X,A)| + \epsilon_2 + 2(\mathbb{E}|\hat{Y} - Y| + \epsilon_1). 
    \end{align*}
\end{theorem}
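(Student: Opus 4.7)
The plan is to derive the oracle inequality via a classical triangle-inequality decomposition, combined with a union bound over the two stated high-probability events. The key observation is that the learning algorithm is trained with the propagated labels $\hat{Y}$, so I should first compare $\hat{f}$ to the optimal predictor of $\hat{Y}$, then pay a ``noise gap'' cost to transfer the conclusion back to the clean labels $Y$. The quantitative content of Theorem~\ref{thm:denoise} is imported transparently through the single scalar $\mathbb{E}|\hat{Y}-Y|$.

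Concretely, I would first apply the triangle inequality inside the expectation:
\begin{equation*}
\mathbb{E}_{X \sim D}|Y - \hat{f}(X,A)| \;\leq\; \mathbb{E}_{X \sim D}|\hat{Y} - \hat{f}(X,A)| + \mathbb{E}_{X \sim D}|Y - \hat{Y}|.
\end{equation*}
On the generalization event, the first term is at most $\inf_f \mathbb{E}_{X \sim D}|\hat{Y} - f(X,A)| + \epsilon_2$ by assumption. I then bound this infimum by a second pointwise triangle inequality $|\hat{Y} - f(X,A)| \leq |\hat{Y} - Y| + |Y - f(X,A)|$, taking $\mathbb{E}_{X \sim D}$ and then the infimum on each side to obtain
\begin{equation*}
\inf_f \mathbb{E}_{X \sim D}|\hat{Y} - f(X,A)| \;\leq\; \mathbb{E}_{X \sim D}|\hat{Y} - Y| + \inf_f \mathbb{E}_{X \sim D}|Y - f(X,A)|.
\end{equation*}
Chaining these three inequalities yields
\begin{equation*}
\mathbb{E}_{X \sim D}|Y - \hat{f}(X,A)| \;\leq\; \inf_f \mathbb{E}_{X \sim D}|Y - f(X,A)| + \epsilon_2 + 2\,\mathbb{E}_{X \sim D}|\hat{Y} - Y|,
\end{equation*}
so only the conversion of $\mathbb{E}_{X \sim D}|\hat{Y}-Y|$ into $\mathbb{E}|\hat{Y}-Y|+\epsilon_1$ remains.

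On the concentration event, $|\hat{Y}-Y|$ is within $\epsilon_1$ of its mean, and integrating this bound against $X \sim D$ gives $\mathbb{E}_{X \sim D}|\hat{Y} - Y| \leq \mathbb{E}|\hat{Y}-Y| + \epsilon_1$, which closes the chain. A union bound over the generalization and concentration events then produces the stated probability. I expect the main obstacle to be the measure-theoretic reconciliation between the scalar mean $\mathbb{E}|\hat{Y}-Y|$ inherited from Theorem~\ref{thm:denoise} and the data-distribution expectation $\mathbb{E}_{X \sim D}|\hat{Y}-Y|$ appearing above: the concentration hypothesis has to be read as a uniform bound over the relevant draws, for instance as a union bound over nodes or as a conditional-on-$X$ concentration, so that the pointwise near-mean inequality survives integration against $D$. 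Once this is settled, the remaining steps are routine applications of the triangle inequality and the two stated assumptions, and no further assumption on the GNN architecture $f$ is needed.
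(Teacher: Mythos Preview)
Your proposal is correct and follows essentially the same route as the paper: a triangle-inequality decomposition, the generalization assumption to pass from $\hat f$ to the infimum over $f$ with respect to $\hat Y$, a second triangle inequality (equivalently, evaluating that infimum at the clean-label minimizer $f^*$) to transfer back to $Y$, and finally the concentration assumption plus a union bound. Your explicit flagging of the $\mathbb{E}_{X\sim D}|\hat Y - Y|$ versus $\mathbb{E}|\hat Y - Y|$ issue is in fact more careful than the paper's own argument, which keeps $|Y-\hat Y|$ outside the expectation throughout and only invokes concentration at the last step.
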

%\begin{rem} 
%\textbf{(Interpretations of Theorem~\ref{thm:gen})} 
The generalization error mainly depends on $\epsilon_2$ and $\mathbb{E}|\hat{Y}-Y|$. Our algorithm iteratively rectifies the noisy labels and selects high-confidence labels into the clean set, thereby reducing $\mathbb{E}|\hat{Y}-Y|$. In addition, as shown in \cite{dong2019distillation}, using the predicted labels generated by the same neural network is able to decrease $\epsilon_2$. This is the reason for the inclusion of predicted labels within our algorithm.

\section{Related Work}
%\sch{need more...at least one column}
% \subsection{Graph Neural Networks}
\subsection{Deep Learning with Noisy Labels}
Dealing with noisy labels can be approached through two mainstream directions: loss adjustment and sample selection~\cite{song2022learning}.
Particularly, loss adjustment aims to mitigate the negative impact of noise by adjusting the loss value or the noisy labels. 
For example, \cite{patrini2017making} explicitly estimates the noise transition matrix to correct the forward and backward loss. Other effective techniques, such as~\cite{huang2020self,song2019selfie}, refurbish the noisy labels by a convex combination of noisy
and predicted labels. 
Sample selection, on the other hand, involves selecting true-labeled examples from a noisy training set via multi-network or multi-round learning. 
For instance, Co-teaching~\cite{han2018co} and Co-teaching+~\cite{yu2019does} use multi-network training, where two deep neural networks are maintained and each network selects a set of small-loss examples and feeds them to its peer network. Multi-round learning methods~\cite{wu2020topological, song2019selfie} iteratively refine the selected set of clean samples by repeating the training rounds. As a result, the performance improves as the number of clean samples increases.

\subsection{Learning GNNs with Noisy Labels}
Graph neural networks (GNNs)~\cite{GCN, APPNP, GAT, GIN, GraphSAGE, SGC} have emerged as revolutionary technologies for graph-structured data, which can effectively capture complex patterns based on both node features and structure information, and then infer the accurate node labels. 
It is also common that real-world graphs are heterophilous which node is more likely to be connected with neighbors that have dissimilar features or different labels. Many studies are proposed to design GNNs for graphs with heterophily~\cite{H2GCN, li2022finding, fagcn2021, chien2021adaptive}.

The robustness of GNNs is well-studied, but most of them focus on the perturbation of graph structure and node features~\cite{dai2018adversarial, wu2019adversarial, zugner2018adversarial, sun2020adversarial} while few works study the label noise. NRGNN~\cite{Nrgnn} and LPM~\cite{LPM} are the first two to deal with the label noise in GNNs explicitly. In detail, NRGNN generates pseudo labels and assigns more edges between unlabeled nodes and (pseudo) labeled nodes against label noise. LPM utilizes label propagation to aggregate original labels and pseudo labels to correct the noisy labels.
However, they are only suitable for homophilous graphs, and perform badly on graphs with heterophily. 
Our proposed approach, \ours\ relaxes the homophily assumption and is more resilient to various types of noise.

\section{Experiments}
\subsection{Experimental Settings}
\label{sec:es}
% \textbf{Datasets.} 
% We conduct experiments on 10 benchmark datasets, which include 2 homophilous graphs (Cora, Citeseer), 5 heterophilous graphs (Chameleon, Cornell, Wisconsin, Texas, Actor) and 3 large-scale heterophilous graphs (Penn94, arXiv-year, snap-patent).
% For each dataset, we randomly
% split nodes into 60\%, 20\%, and 20\%
% for training, validation and testing, and measure the performance of all models on the test set.
% The statistics and details of these datasets can be found in Appendix~\ref{ap:setup}.

\noindent\textbf{[Label noise].}
For each dataset, we randomly
split nodes into 60\%, 20\%, and 20\%
for training, validation and testing, and measure the performance of all models on the test set.
We set 10\% nodes with clean labels, 
and corrupt the remaining training nodes with noise in training set. 
Following~\cite{patrini2017making}, 
% we corrupt the labels by constructing the noise transition matrix 
% $\bm{T} \in [0,1]^{c \times c}$ where $\bm{T}_{ij}=\mathbb{P}(\tilde{y} = j|y = i)$ is the probability of the label $i$ being corrupted into the label $j$. 
We consider two types of noise:
\emph{uniform noise} means the true label $c_i$ have a probability $e$ to be uniformly flipped to other classes, i.e.\ $ \bm{T}_{ij} =\ e/(c - 1)$ for $i \ne j $ and $\bm{T}_{ii}=1-e$;
\emph{flip noise} means the true label $C_i$ have a probability $ e $ to be flipped to only a similar class $C_j$, i.e.\ $ \bm{T}_{ij} = e$ and $ \bm{T}_{ii} = 1 - e$.
We vary the value of $e$ from 0\% to 60\%.

\noindent\textbf{[Baselines].} 
We compare \ours with 10 baselines, which can be categorized into three types. The first type involves (1) \textbf{GCN}~\cite{GCN} (2) \textbf{GloGNN}~\cite{li2022finding} and (3) \textbf{H$_{2}$GCN}~\cite{H2GCN}, three popular GNNs for homophilous and heterophilous graphs. They are directly trained on both clean and noisy training samples. 
The second type includes baselines that handle label noise:
% In the third type, we modify two typical loss correction and sample selection methods and make them suitable for graph data. 
(4) \textbf{Co-teaching+}~\cite{yu2019does} is a multi-network training method to select clean samples. 
(5) \textbf{Backward}~\cite{patrini2017making} is to revise predictions and obtain unbiased loss on noisy training samples.
%(7) TopoFilter~\cite{wu2020topological}: Sample selection method which utilizes thespatial topological pattern of learned representations to detect true-labeled examples.
For fairness, we use GloGNN as a backbone GNN model for Co-teaching+ and Backward.
The third type includes five GNNs dealing with label noise: 
(6) \textbf{NRGNN}~\cite{Nrgnn}, 
(7) \textbf{LPM}~\cite{LPM}, 
(8) \textbf{RTGNN}~\cite{RTGNN},
(9) \textbf{ERASE}~\cite{erase},
and (10) \textbf{CGNN}~\cite{yuan2023learning}.
% In detail, NRGNN assigns more edges between unlabeled nodes and (pseudo) labeled nodes against noise. 
% LPM addresses the graph label noise by label propagation and meta-learning. 
% RTGNN introduce self-reinforcement and consistency regularization as supplemental supervision to explicitly govern label noise. 
% ERASE introduce a decoupled label propagation method  to learn representations with error tolerance by maximizing coding rate reduction. 
% CGNN employ graph contrastive learning as a regularization term, which promotes two views of augmented nodes to have consistent representations which enhance the robustness of node representations to label noise.
Since the absence of publicly available code for CGNN, we validate \ours\ under the same settings in their paper and report the results of CGNN directly from the paper.
% compared them with it. 
The results are presented in the appendix~\ref{sec:cgnn} due to the space limitation.

Due to the space limitation, we move details on baselines (Sec.~\ref{ap:baselines}), 
datasets (Sec.~\ref{ap:dataset_description}), 
experimental setup (Sec.~\ref{sec:Experimental-setup}),
hyper-parameter sensitivity analysis (Sec.~\ref{sec:Sensitivity_Analysis})
, and the analysis of selection strategies (Sec.~\ref{sec:Selection_Strategies}) to Appendix.

\subsection{Performance Results}\label{sec:performance}

\begin{figure*}[ht!]
  \centering  \includegraphics[width=0.8\linewidth]{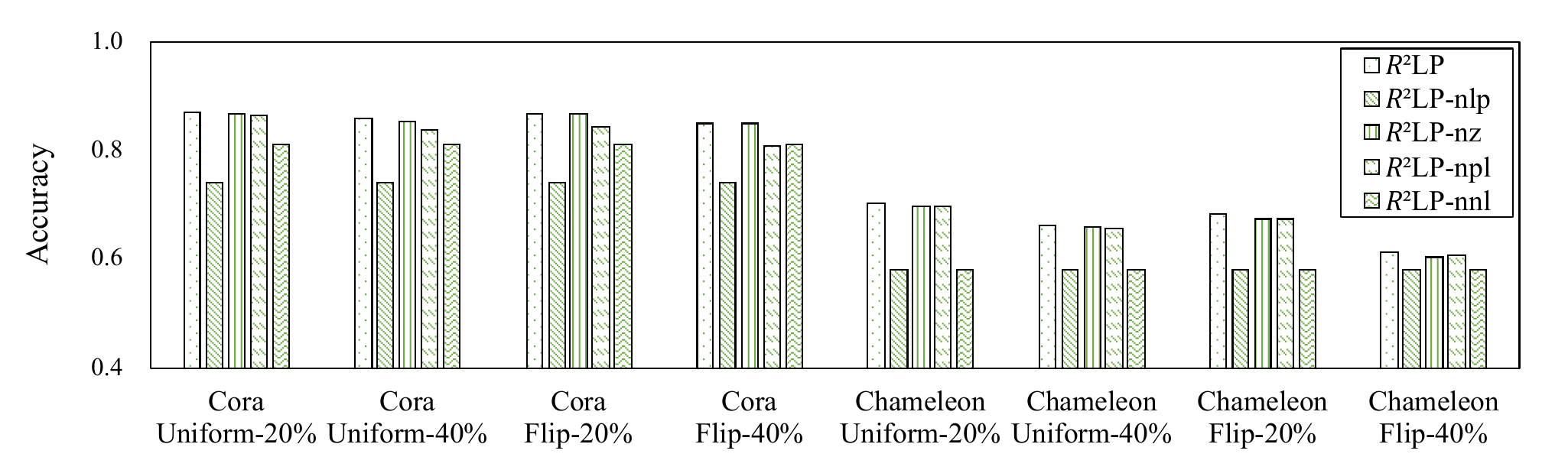}
   \vspace{-1,2em}
  \caption{Ablation study on the main components of \ours. 
  }
  \label{ablation}
\end{figure*}

\begin{table}[ht!] 
 % \scriptsize
  \centering
  \caption{The classification accuracy (\%)
  without a clean label set. We highlight the best score on each dataset in bold and underline the runner-up's.}
  \vspace{-1em}
  \label{table:no_clean}
  \resizebox{\linewidth}{!}
{
  \begin{tabular}{c | c | c c c |c c c  }
    \toprule
    \multirow{2}{*}{\textbf{Datasets}} & 
    \multirow{2}{*}{\textbf{Methods}} & 
    % \multicolumn{1}{c|}{\textbf{No Noise}} &
    \multicolumn{3}{c|}{\textbf{Uniform Noise}} & \multicolumn{3}{c}{\textbf{Flip Noise}} \\ 
    &  & 20\% & 40\% & 60\%   
    & 20\% & 40\% & 60\%  \\
    \midrule
    \multirow{6}{*}{Cora}
    & GloGNN & 82.53 & 79.92 & 75.70 & 80.84 & 71.06  & 41.00 \\
    & H$_{2}$GCN  & \underline{85.34} & \underline{80.62} & 75.52 & \underline{82.25} & \underline{71.20} &  45.42 \\
    & NRGNN  & 79.43 & 72.10 & 63.92 & 76.42 & 70.19 & \textbf{60.32} \\
    & Co-teaching+  & 80.00 & 69.59 & 55.78 & 75.20 & 59.25 &  33.15 \\
    & Backward & 83.35 & 80.18 & \underline{76.16} & 81.04 & 69.65 & 42.03 \\
    & \ours & \textbf{85.45} & \textbf{84.32} & \textbf{78.87} & \textbf{84.95} & \textbf{76.21} &  \underline{46.62} \\
    \midrule
    \multirow{6}{*}{Chameleon}
    & GloGNN  & 63.94 & 60.83 & \underline{58.46} & 62.19 & 52.71  & 33.35 \\
    & H$_{2}$GCN  & 55.30 & 51.79 & 46.60 & 54.10 & 47.83 & 34.36 \\
    & NRGNN  & 37.01 & 34.56 & 31.85 & 35.37 & 31.86 & 29.32 \\
    & Co-teaching+  & 66.60 & 61.90 & 50.61 & 64.40 & 40.19 & 25.94 \\
    & Backward  & \underline{69.34} & \textbf{65.46} & \textbf{61.31} & \textbf{67.54} & \underline{58.33} & \textbf{42.94} \\
    & \ours  & \textbf{69.45} & \underline{64.16} & 58.13 & \underline{67.23} & \textbf{59.01} & \underline{42.87} \\
    \bottomrule
\end{tabular}
}
\end{table}

Table~\ref{flip} presents the performance results of all the methods on 10 benchmark datasets under different flip noise levels. We attach Table~\ref{uniform} for various uniform noise levels in Appendix.
%summarize the performance results of all the methods on 10 benchmark datasets.
Every method was repeated 10 times for small-scale datasets and 5 times for large-scale datasets over different random splits, and we report the mean classification accuracy on the test set.
From the tables, we make the following observations:
(1) \ours consistently outperforms the competitive methods across various noise settings in the majority of cases.
(2) The GNN models GCN, GloGNN and H$_{2}$GCN exhibit good performance on homophilous graphs. However, their accuracy drops dramatically as the noise ratio increases on heterophilous graphs.
(3) LPM, NRGNN, RTGNN and ERASE aim to address the label noise in graphs, but their accuracy is limited on heterophilous graphs because they rely on the assumption of graph homophily and label smoothness.
(4) Co-teaching+ and Backward are two mainstream methods to address the label noise through sample selection and loss correction.
However, their accuracy significantly declines when facing a high noise ratio. 
% Tables~\ref{uniform_large} and Table~\ref{flip_large} summarize the performance results of all the methods on 3 large-scale heterophilous benchmark datasets.
% Every method was repeat 5 times over different random splits, and we report the mean classification accuracy on the test set.
% % From the table, we make the following observations:
% % (1) \ours consistently outperforms the competitive methods across various noise settings in nearly all cases.
% From the table, 
% we can also observe two results: 
% (1) \ours achieves the best performance in nearly all cases. 
% (2) Even faced label noise with a high noise ratio, \ours consistently provides more stable results.

Overall, \ours iteratively learns graph homophily to correct noise labels, ensures high-quality labels for training the GNN model,
%, and the corrected labels also ensure homophilous graph structure for correcting noise labels. 
and achieves high and stable accuracy under different noises.

\subsection{Ablation Study}
% \noindent\textbf{Ablation Study and Sensitivity Analysis.}
The ablation study is done to understand the main components of \ours. 
We first remove the whole label propagation step and call this variant \ours-nlp (\textbf{n}o \textbf{l}abel \textbf{p}ropagation step). 
Instead, we use self-training~\cite{self_training} that just picks the most confident noisy labels in the GNN and put them into the clean set to improve the performance.
Secondly, we replace the reconstructed graph with 
the similarity matrix and call this variant \ours-nz (\textbf{n}o \bm{$Z^{(l)*}$}).
Further, we do not propagate the noisy labels or the predicted labels in the label propagation, respectively, and call these two variants \ours-nnl (\textbf{n}o \textbf{n}oisy \textbf{l}abels) and \ours-npl (\textbf{n}o \textbf{p}redicted \textbf{l}abels).
We compare \ours with these four variants, and the results are presented in Figure~\ref{ablation}. Our findings show that \ours outperforms all the variants on the two datasets. Furthermore, the performance gap between \ours and \ours-nlp highlights the importance of label propagation for label correction. The noise labels significantly enhance the performance of label propagation, especially on heterophilous graphs.

\subsection{Performance without Clean Set}
\label{sec:no_clean}
For fairness,
we remove the clean label set and further study the model performance without clean set.
The results on Cora and Chameleon are given in Table~\ref{table:no_clean}.
For each dataset, we compare three noise rates from low to high under uniform/flip noise, which leads to a total of 12 comparisons.
From the table, 
we see that 
our method \ours\ achieves the best or the runner-up results 11 times, which shows that \ours\ are still effective even without clean set.
For other competitors,
although they perform well in some cases, they cannot 
provide consistently superior results.
For example,
under $40\%$ uniform noise,
Backward achieves the highest accuracy of 65.46 on Chameleon, but its accuracy is only 80.18 on Cora (the winner's is 84.32). 
All these results demonstrate the robustness of \ours\ against label noise. 

\begin{table}[]
\centering
\footnotesize
\caption{The classification results and the homophily of the graph with iterations.}
\vspace{-1em}
\label{ta:homo}
\begin{tabular}{@{}ccccccc@{}}
\toprule
Types              & Noise                 & Results    & 5 rounds & 10 rounds & 15 rounds & 20 rounds \\ \midrule
\multirow{4}{*}{Uniform} & \multirow{2}{*}{20\%} & Acc (\%)   & 66.67    & 69.08     & 70.83     & 71.59     \\
                         &                       & Edge Homo. & 0.54     & 0.57      & 0.58      & 0.58      \\
                         & \multirow{2}{*}{40\%} & Acc        & 65.79    & 65.57     & 67.59     & 68.23     \\
                         &                       & Edge Homo. & 0.53     & 0.55      & 0.57      & 0.57      \\ \midrule
\multirow{4}{*}{Flip}    & \multirow{2}{*}{20\%} & Acc (\%)   & 66.23    & 70.61     & 71.27     & 70.83     \\
                         &                       & Edge Homo. & 0.54     & 0.57      & 0.58      & 0.57      \\
                         & \multirow{2}{*}{40\%} & Acc (\%)   & 54.17    & 61.18     & 62.28     & 61.09     \\
                         &                       & Edge Homo. & 0.51     & 0.54      & 0.56      & 0.56      \\ \bottomrule
\end{tabular}
\end{table}

\subsection{The Homophily of Graphs with Iterations}
\label{sec:homo}
We further study classification accuracy and the homophily of the reconstructed graph with iterations. 
Specifically, 
we reported the results at five-round intervals in Table~\ref{ta:homo} using the Chameleon dataset. 
The initial homophily of Chameleon is 0.23. 
We observe that:
(1) Notably, $R^2LP$ significantly enhances the homophily of the reconstructed graph with iterations under different situations.
(2) As the number of learning rounds increases, both the reconstructed graph's homophily and accuracy gradually improve and converge.

\begin{table}[ht!]
\centering
\footnotesize
\caption{Average running time (s).}
\vspace{-1em}
\label{time}
\begin{tabular}{@{}cccc@{}}
\toprule
Methods              & Cora                                                   & Actor                                                  & arXiv-year                                             \\ \midrule
LPM                  &  41.96  &  158.32 &  512.37 \\
NRGNN                &  46.81  &  137.39 &  OOM    \\
RTGNN                & 143.16 &  283.76 &  OOM     \\
ERASE                &  132.50 & 180.51 & 455.13\\
\ours &  26.98  & 84.15  & 119.46 \\ \bottomrule
\end{tabular}
\end{table}

\vspace{-1em}
\subsection{Runtime Comparison}
\label{sec:runtime}

We compare the running time using the Cora, Actor and arXiv-year datasets in Table~\ref{time}, 
where all processes are executed on a single Tesla A100 GPU (80GB memory).
From the table, we see that \ours\ is much faster than other baselines.

\section{Conclusion}
In this paper, we address the problem of graph label noise under arbitrary graph heterophily.
We begin by empirically exploring the relationship between graph homophily and label noise, leading to two key observations: (1) graph homophily can indeed mitigate the effect of graph label noise on GNN performance, and (2) LP-based methods achieve great performance as graph homophily increases. These findings inspire us to combine LP with homophilous graph reconstruction to propose a 
% simple yet 
effective algorithm \ours. It is a multi-round process that performs graph reconstruction with homophily, label propagation for noisy label refinement and high-confidence sample selection iteratively. 
%Our theoretical analysis further highlights the denoising effects of \ours.
Finally, we conducted extensive experiments and demonstrated the superior performance of \ours compared to other 10 state-of-the-art competitors on 10 benchmark datasets.
In the future, we plan to explore and improve \ours in more challenging scenarios, where the clean label set is unavailable, or input data, including node features and graph topology, also exhibits noise.

\begin{acks}
  This work is supported by National Natural Science Foundation of China No. 62202172, Shanghai Science and Technology Committee General Program No. 22ZR1419900 and
Singapore MOE AcRF Tier-2 Grant (T2EP20122-0003).
\end{acks}

% \subsection{Acknowledgments}

\bibliographystyle{ACM-Reference-Format}
\balance
\bibliography{sample-base}

\clearpage

%%
%% If your work has an appendix, this is the place to put it.
%\newpage
\appendix

\begin{table*}[h]
  \small
  \caption{Datasets statistics. Note that Edge Hom.~\cite{zhu2020beyond} is defined as the fraction of edges that connect nodes with the same label.}
  \vspace{-1em}
  \label{statistics_dataset}
  \begin{tabular}{c c c c c c c c c c c c}
    \toprule
     & \textbf{Cora} & \textbf{Citeseer}  & \textbf{Chameleon} &
     \textbf{Cornell} & \textbf{Wisconsin} &
     \textbf{Texas} & \textbf{Actor}& \textbf{Penn94} &\textbf{arXiv-year} &\textbf{snap-patents}\\
    \midrule
    Edge Hom. & 0.81 & 0.74  & 0.23 & 0.30 & 0.21 & 0.11 & 0.22 & 0.47 & 0.22 & 0.07 \\
    \#Nodes & 2,708 & 3,327 & 2,277 & 183 & 251 & 183 & 7,600 & 41,554 & 169,343 & 2,923,922\\
    \#Edges & 5,278 & 4,676 & 31,421 & 280 & 466 & 295 & 26,752 & 1,362,229 & 1,166,243 &13,975,788 \\
    \#Features & 1,433 & 3,703 & 2,325 & 1,703 & 1,703 & 1,703 & 931 & 5 &128 &269\\
    \#Classes & 7 & 6 & 5 & 5 & 5 & 5 & 5 & 2 & 5 & 5\\
  \bottomrule
\end{tabular}
\end{table*}

% \textbf{Experimental setup} 

\section{Baselines}
\label{ap:baselines}
% \textbf{Datasets.} 

% \noindent\textbf{Label noise.}
% We set 10\% nodes with clean labels, 
% and corrupt the remaining training nodes with noise in training set. 
% Following~\cite{patrini2017making}, 
% % we corrupt the labels by constructing the noise transition matrix 
% % $\bm{T} \in [0,1]^{c \times c}$ where $\bm{T}_{ij}=\mathbb{P}(\tilde{y} = j|y = i)$ is the probability of the label $i$ being corrupted into the label $j$. 
% We consider two types of noise:
% \emph{uniform noise} means the true label $c_i$ have a probability $e$ to be uniformly flipped to other classes, i.e.\ $ \bm{T}_{ij} =\ e/(c - 1)$ for $i \ne j $ and $\bm{T}_{ii}=1-e$;
% \emph{flip noise} means the true label $C_i$ have a probability $ e $ to be flipped to only a similar class $C_j$, i.e.\ $ \bm{T}_{ij} = e$ and $ \bm{T}_{ii} = 1 - e$.
% We varied the value of $e$ from 0\% to 60\%.
% % \emph{uniform noise} and \emph{flip noise} ranging from 0\% to 60\%.
% % \sch{give the descriptions and equations for uniform and flip noise.}

% \noindent\textbf{Baselines.}
We compare \ours with 10 baselines, which can be categorized into three types. The first type involves (1) \textbf{GCN}~\cite{GCN} (2) \textbf{GloGNN}~\cite{li2022finding} and (3) \textbf{H$_{2}$GCN}~\cite{H2GCN}, three popular GNNs for homophilous and heterophilous graphs. They are directly trained on both clean and noisy training samples. 
In the second type, we modify two typical loss correction and sample selection methods and make them suitable for graph data. 
(4) \textbf{Co-teaching+}~\cite{yu2019does} is a multi-network training method to select clean samples. 
(4) \textbf{Backward}~\cite{patrini2017making} is to revise predictions and obtain unbiased loss on noisy training samples.
%(7) TopoFilter~\cite{wu2020topological}: Sample selection method which utilizes thespatial topological pattern of learned representations to detect true-labeled examples.
For fairness, we use GloGNN as a backbone GNN model for Co-teaching+ and Backward.
In the third type, we compare with five GNNs dealing with label noise, including 
(6) \textbf{NRGNN}~\cite{Nrgnn}, 
(7) \textbf{LPM}~\cite{LPM}, 
(8) \textbf{RTGNN}~\cite{RTGNN},
(9) \textbf{ERASE}~\cite{erase},
(10) \textbf{CGNN}~\cite{yuan2023learning}.
In detail, NRGNN assigns more edges between unlabeled nodes and (pseudo) labeled nodes against noise. LPM addresses the graph label noise by label propagation and meta-learning. RTGNN introduce self-reinforcement and consistency regularization as supplemental supervision to explicitly govern label noise. ERASE introduce a decoupled label propagation method  to learn representations with error tolerance by maximizing coding rate reduction. CGNN employ graph contrastive learning as a regularization term, which promotes two views of augmented nodes to have consistent representations which enhance the robustness of node representations to label noise.
Since the absence of publicly available code for CGNN, we validated \ours\ under the settings described in their paper and compared them with it. The results are presented in the appendix~\ref{sec:cgnn}.

% We implemented \ours by PyTorch and conducted the experiments on 10 datasets with one A100 GPU. The model is optimized by Adam~\cite{adam}.
% To fine-tune hyperparameters, we performed a grid search based on the result of the validation set.
% Table~\ref{hyperparameters} summarizes the search space of hyper-parameters in \ours.

\section{Dataset}
\label{ap:dataset_description}
We conduct experiments on 10 benchmark datasets, which include 2 homophilous graphs (Cora, Citeseer), 5 heterophilous graphs (Chameleon, Cornell, Wisconsin, Texas, Actor) and 3 large-scale heterophilous graphs (Penn94, arXiv-year, snap-patent).
For each dataset, we randomly
split nodes into 60\%, 20\%, and 20\%
for training, validation and testing, and measure the performance of all models on the test set.
The statistics and details of these datasets can be found in Table~\ref{statistics_dataset}, and we describe each dataset in detail below:

\noindent\textbf{\emph{Cora} and \emph{Citeseer}}
%and \emph{Pubmed}} 
are two graphs commonly used as benchmarks.
In these datasets, each node represents a scientific paper and each edge represents a citation. Bag-of-words representations serve as feature vectors for each node. The label of nodes indicates the research field of the paper. 
%The task on these datasets is node classification.

\noindent\textbf{\emph{Texas}, \emph{Wisconsin} and \emph{Cornell}} are three heterophilous graphs representing links between web pages at the corresponding universities. In these datasets, each node represents a web page and each edge represents a hyperlink between nodes. Bag-of-words representations are also used as feature vectors for each node. The label is the web category. 
%The task on these datasets is node classification.

\noindent\textbf{\emph{Chameleon}} is a subgraph of web pages in Wikipedia, and the task is to classify nodes into five categories. %Note that this dataset is also a heterophilous graph.

\noindent\textbf{\emph{Actor}} is a heterophilous graph that represents actor co-occurrence in Wikipedia pages.
Each node represents the actor, and each edge is the co-occurrence in Wikipedia pages. Node features are derived from keywords included in the actor's Wikipedia page. The task is to categorize actors into five classes.

\noindent\textbf{\emph{Penn94}} is a subgraph extracted from Facebook whose nodes are students. Node features include major, second major/minor, dorm/house, year and high school. We take students’ genders as nodes’ labels. 

\noindent\textbf{\emph{arXiv-year} and \emph{snap-patents}} are two heterophilous citation networks. \emph{arXiv-year} is a directed subgraph of ogbn-arXiv, where nodes are arXiv papers and edges represent the citation relations. We construct node features by taking the averaged word2vec embedding vectors of tokens contained in both the title and abstract of papers. The task is to classify these papers into five labels that are constructed based on their posting year.
\emph{snap-patents} is a US patent network whose nodes are patents and edges are citation relations. Node features are constructed from patent metadata. Our goal is to classify the patents into five labels based on the time when they were granted.

% \section{Pseudo code}
% \label{sec:pseudocodes}

% \begin{table}[ht!]
% \centering
% \large
% \caption{Average running time (s).}
% \vspace{-1em}
% \label{time}
% \begin{tabular}{@{}cccc@{}}
% \toprule
% Methods              & Cora                                                   & Actor                                                  & arXiv-year                                             \\ \midrule
% LPM                  &  41.96  &  158.32 &  512.37 \\
% NRGNN                &  46.81  &  137.39 &  OOM    \\
% RTGNN                & 143.16 &  283.76 &  OOM     \\
% ERASE                &  132.50 & 180.51 & 455.13\\
% \ours &  26.98  & 84.15  & 119.46 \\ \bottomrule
% \end{tabular}
% \end{table}

\section{Experimental setup}
\label{sec:Experimental-setup}
We implemented \ours by PyTorch and conducted the experiments on 10 datasets with one A100 GPU. The model is optimized by Adam~\cite{adam}.
% To fine-tune hyperparameters, we performed a grid search based on the result of the validation set.
 % Appendix~\ref{sec:Hyper-parameters} summarizes the search space of hyper-parameters in \ours.
Considering that some methods are designed for scenarios without clean samples,
we finetune these baselines on the initial clean sets for fairness.
To fine-tune hyperparameters, we performed a grid search based on the result of the validation set.
Table~\ref{hyperparameters} summarizes the search space of hyper-parameters in \ours.
\begin{table}[H]
  \small
  \centering
  \caption{Grid search space.}
  \vspace{-1em}
  \label{hyperparameters}
  \begin{tabular}{c | c }
    \toprule
    \textbf{Hyper-parameter} & \textbf{Search space} \\
    \midrule
    lr & $ \{ 0.005, 0.01, 0.02, 0.03  \} $ \\
    dropout & $ [ 0.0, 0.9 ] $ \\
    weight decay &  \{1e-7, 5e-6, 1e-6, 5e-5, 1e-5, 5e-4\}  \\
    $\beta_{1}$  &$ \{0,1,10\} $ \\
    $\beta_{2}$  & $ \{0.1, 1, 10, 10^{2}, 10^{3}\} $ \\
    $\gamma$ & $ [0.0, 0.9] $ \\
    $\alpha _1$ & $ [ 0.0, 1.0 ] $ \\
    $\alpha _2$ & $ [ 0.0, 1.0  ] $ \\
    $\alpha _3$ & $ [ 0.0, 1.0  ] $ \\
    $\varepsilon$ & $ [ 0.1, 0.9] $ \\
  \bottomrule
\end{tabular}
\end{table}

\section{Additional Experimental Result}
\subsection{Graph Homophily Mitigates the Effect of Graph Label Noise}
\label{sec:homo_noise_uniform}

We conducted experiments on 4 existing methods, including two SOTA methods for graph label noise: LPM and NRGNN; and two classical methods: GCN and LP.
Figure~\ref{fg:homo_noise_uniform} shows the performance of all the methods under different levels of uniform label noise.

\begin{figure*}[]
    \centering
    \includegraphics[scale=0.4]{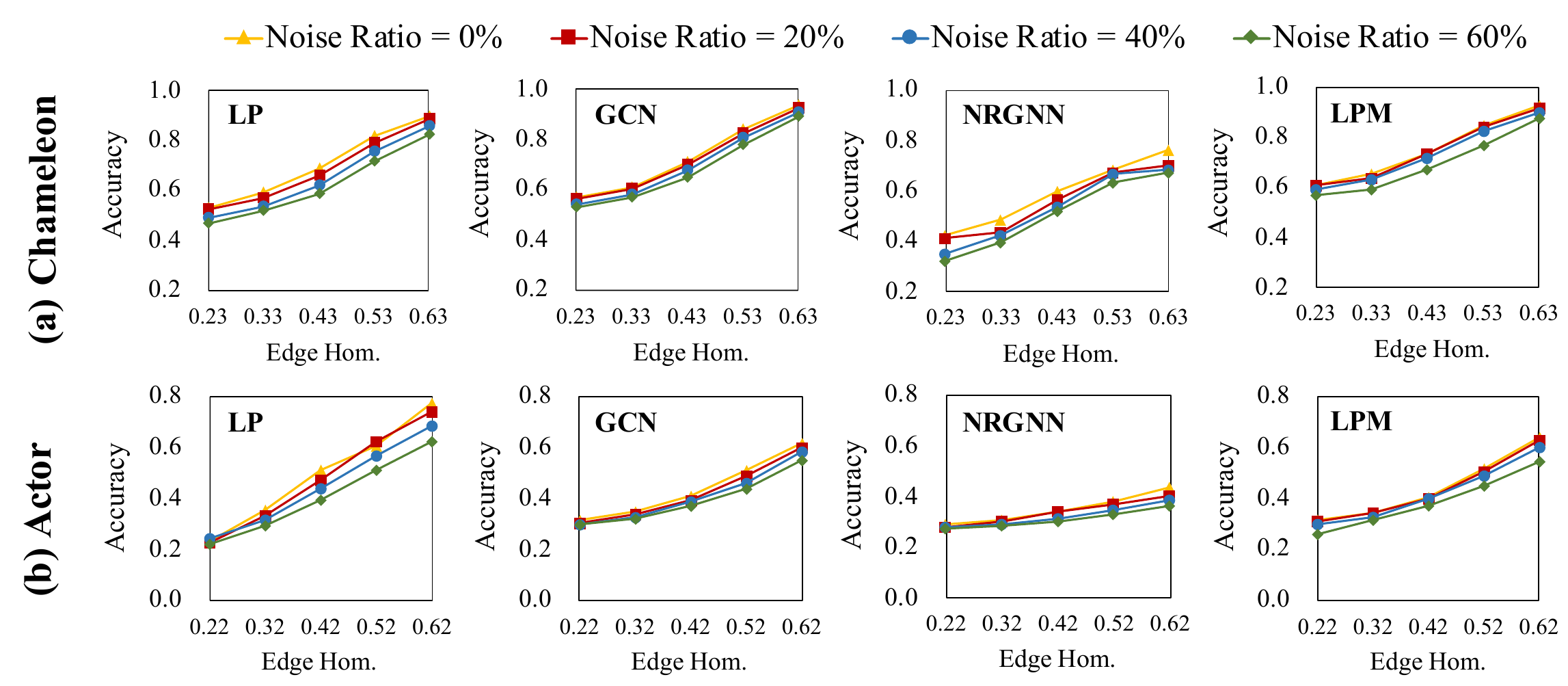}
    % \subfloat[Chameleon]{
    % 	\includegraphics[scale=0.4]{homo_noise_chameleon_uniform.pdf}}\\
    % \subfloat[Actor]{
    %     \includegraphics[scale=0.4]{homo_noise_film_uniform.pdf}}
    \vspace{-1em}
    \caption{The impact of edge homophily on graph label noise across various methods (uniform noise)}
    \label{fg:homo_noise_uniform}
 \end{figure*}

\subsection{Performance Comparison with CGNN}
\label{sec:cgnn}
We provide the mean accuracy of 5 runs for each baseline. In our experiments, the training label rate is set to 0.01 and the percent
of noisy labels is fixed at 20\%. 
Table~\ref{cgnn} presents the result on Coauthor CS, Amazon Photo and Amazon Computers.
From the tables, we can see \ours consistently outperforms the
competitive methods across the uniform and flip noise settings in the majority of cases.

\begin{table}[H]
\footnotesize
\caption{The classification accuracy (\%) compared with CGNN.}
\vspace{-1em}
% \resizebox{\linewidth}{!}
% {
\begin{tabular}{ccccc}
\toprule
\textbf{Methods}               & \textbf{Noise}        & \textbf{Coauthor CS} & \textbf{Amazon Photo} & \textbf{ \makecell[c]{Amazon \\ Computers}} \\ 
\hline
\multirow{2}{*}{CGNN} & Uniform-20\% & 84.1        & 85.3         & 81.8             \\
                      & Flip-20\%    & 81.0        & 85.1         & 81.6             \\
\hline
\multirow{2}{*}{\ours}  & Uniform-20\% & 85.3        & 85.8         & 83.6             \\
                      & Flip-20\%    & 83.1        & 84.6         & 82.7 \\
\bottomrule
\end{tabular}
% }
\label{cgnn}
\end{table}

 \begin{figure}[H]
  \centering 
  \includegraphics[width=0.9\linewidth]{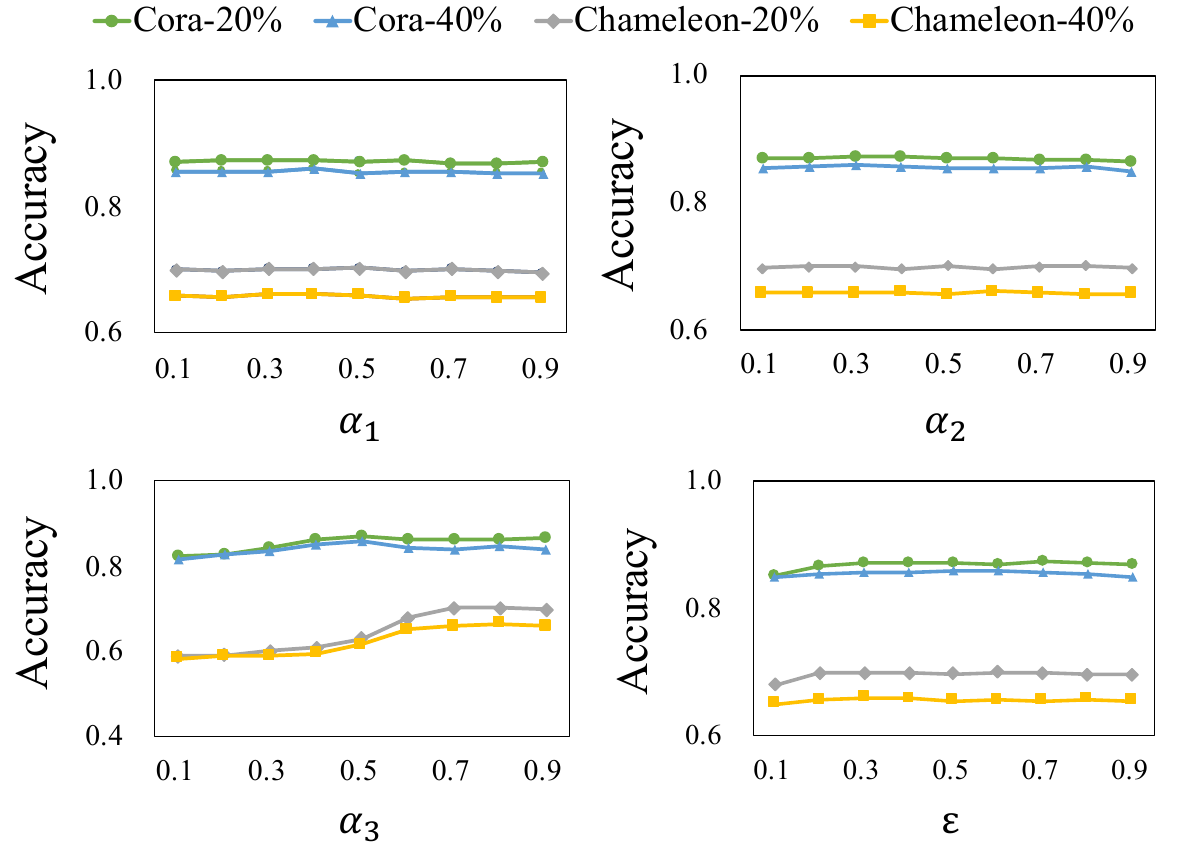}
  \vspace{-1em}
  \caption{Hyper-parameter sensitivity analysis   }
  \label{paras}
\end{figure}

\subsection{Hyper-parameter Sensitivity Analysis}
\label{sec:Sensitivity_Analysis}
% \noindent\textbf{Ablation Study and Sensitivity Analysis.}
Here we study the sensitivity of four hyper-parameters:
the weight $\alpha _{1}$, $\alpha _{2}$, $\alpha _{3}$ and the sample selection ratio $\varepsilon$. 
%We vary one parameter each time while keeping the others fixed.
The results are shown in Figure~\ref{paras}.
For all the hyper-parameters, \ours consistently performs well across a wide range of values, which demonstrates that these hyper-parameters do not significantly impact the performance of \ours.

% \subsection{Performance without Clean Set}
% \label{sec:no_clean}

\subsection{Analysis of Selection Strategies}
\label{sec:Selection_Strategies}
% \noindent\textbf{Analysis of Selection Strategies.} \ours employs a combination of label propagation and sample selection strategies to mitigate the impact of label noise. We select samples with high confidence based on a certain proportion relative to the current training set.
% \ours combines loss correction with the sample selection strategies, which aims to mitigate the impact of label noise and ensure the quality of the training data.
% We select the samples with high confidence according to the proportion relative to the current training set.
We compare the performance of \ours by using different selection functions: 
(1) \textit{Threshold}~\cite{yu2019does} chooses the noisy labels whose confidence is greater than a certain threshold;
(2) \textit{Absolute number} selects an absolute number of noisy labels with high confidence based on the current number of samples;
(3) \textit{Statistical selection}~\cite{patel2023adaptive} uses statistics to derive the threshold for sample selection;
(4) \ours selects samples with high confidence based on a certain proportion relative to the current noisy nodes.
Figure~\ref{select} shows the results on Cora and Chameleon datasets. 
From the figures, we observe that the accuracy drops dramatically as the noise ratio increases when using \textit{Threshold} and \textit{Absolute number}. The reason is that they choose many noisy labels into the clean set. \textit{Statistical selection} accurately selects the node labels, but too stringent selection conditions limit the number of selected labels. 
%It is worth noting that accurate selection is obviously more required on the complex heterophily graphs.
Compared with these functions, \ours selects the labels with respect to the suitable quantity and high quality of remaining samples. 
\begin{figure*}[ht!]
  \centering  \includegraphics[width=0.83\linewidth]{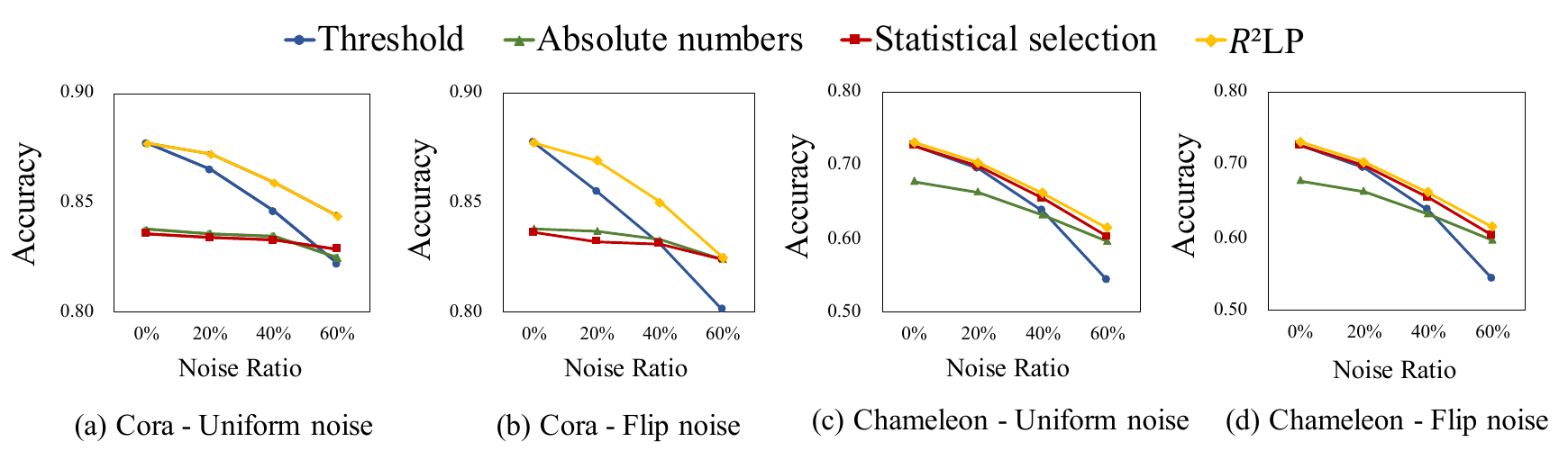}
  \vspace{-1.2em}
  \caption{The effect of different selection strategies.}
  \label{select}
\end{figure*}

\begin{table*}
\setlength{\tabcolsep}{2.5pt}
  %\scriptsize
  \centering
  \caption{The classification accuracy (\%) over the methods on 10 datasets with uniform noise ranging from 0\% to 60\%. We highlight the best score on each dataset in bold. OOM denotes the out-of-memory error. 
  % and the runner-up score with underline.
  }
\vspace{-1em}
  \label{uniform}
  \resizebox{\linewidth}{!}
  {
  \begin{tabular}{ccccccccccccc}
    \toprule
    \multicolumn{1}{c|} {\textbf{Types}} & \textbf{Methods} &  \multicolumn{1}{c|}{\textbf{Noise}} &\textbf{Cora} & \textbf{Citeseer} & 
    \textbf{Chameleon} & \textbf{Cornell} & \textbf{Wisconsin} &
    \textbf{Texas} & \textbf{Actor} &\textbf{Penn94} & \textbf{arXiv-year} & \textbf{snap-patents} \\ 
    \hline
    \multicolumn{1}{c|}{\multirow{12}{*}{
    GNNs
    }} & \multirow{4}{*}{GCN} &\multicolumn{1}{c|}{0\%}& 87.75 &76.81  &  56.95 & 61.89 &63.92  & 62.16 & 31.27 & 80.76 & 44.67  & 53.59 \\
    \multicolumn{1}{c|}{} &  & \multicolumn{1}{c|}{20\%} & 86.54 &75.66  &54.29  & 61.08 & 63.52 & 62.43 & 30.27 & 80.29 & 39.59 & 50.89 \\
    \multicolumn{1}{c|}{}  &  & \multicolumn{1}{c|}{40\%} & 85.74 & 75.05 & 48.90 & 61.08 & 60.19 & 62.43 & 30.08 & 78.21 & 38.17 & 48.39\\
    \multicolumn{1}{c|}{} &  & \multicolumn{1}{c|}{60\%} & 83.67 & 73.44 & 33.13 &60.54  & 60.78 & 61.62 & 29.79 & 76.74 & 36.88 & 46.81\\ 
    \cline{2-13} 
    
    \multicolumn{1}{c|}{\multirow{12}{*}{}}  &\multirow{4}{*}{GloGNN} 
    &\multicolumn{1}{c|}{0\%}&  84.58 & 76.09 &  69.83 & 83.51 & 88.43 & 84.05 & 38.55 &85.66 &52.74 &62.33\\
    \multicolumn{1}{c|}{} &  & \multicolumn{1}{c|}{20\%} & 82.53 & 74.64 & 69.38 & 80.81 & 84.31 & 80.00 & 37.86 &81.85 &50.47 &58.96\\
    \multicolumn{1}{c|}{}  &  & \multicolumn{1}{c|}{40\%} & 79.92 & 72.62 & 65.78 & 74.86 & 74.50 & 68.37 & 37.33 &77.44 &47.77 &54.97\\
    \multicolumn{1}{c|}{} &  & \multicolumn{1}{c|}{60\%} &  75.70 & 69.90 & 61.55 & 69.45 & 66.27 & 61.89 & 36.06 &71.59 &44.82 &50.12\\ 
    \cline{2-13} 
    
    \multicolumn{1}{c|}{\multirow{12}{*}{}}  &\multirow{4}{*}{H$_{2}$GCN} 
    &\multicolumn{1}{c|}{0\%}& 87.87 & 77.11 & 60.00 & 82.59 & 85.65 & 86.26 & 35.70 &79.02 &48.72 &57.21 \\
    \multicolumn{1}{c|}{} &  & \multicolumn{1}{c|}{20\%} &  85.34 & 74.51  & 55.31 & 74.27 & 75.09 & 77.27  & 31.25 &76.22 &45.60 &54.68\\
    \multicolumn{1}{c|}{}  &  & \multicolumn{1}{c|}{40\%} & 82.46  & 73.17 &  51.79 & 67.02 & 74.50 & 69.45 & 31.95 &73.73 &40.13 &50.18\\
    \multicolumn{1}{c|}{} &  & \multicolumn{1}{c|}{60\%} &  80.94 & 70.87  & 46.60& 57.56 & 69.01 & 62.16  & 31.25 &71.01 &33.65 &41.36\\
    \hline
    
   \multicolumn{1}{c|}{\multirow{8}{*}{\makecell{Methods\\
   for \\Label \\Noise}}} & \multirow{4}{*}{Co-teaching+} &\multicolumn{1}{c|}{0\%} & 85.76  & 76.32 & 72.52  & 70.54  & 72.68  &  67.21 & 34.56 &86.90 &OOM &OOM  \\
    \multicolumn{1}{c|}{} &  & \multicolumn{1}{c|}{20\%} & 80.46  & 73.32   & 67.30 & 69.91 & 60.19 & 64.56  & 33.62 &82.73 &OOM &OOM\\
    \multicolumn{1}{c|}{}  &  & \multicolumn{1}{c|}{40\%} & 70.00  & 63.00  &58.81  &54.05 & 54.05 & 60.70  & 30.27  &73.86 &OOM &OOM\\
    \multicolumn{1}{c|}{} &  & \multicolumn{1}{c|}{60\%} & 59.17   & 52.29  & 51.90 & 49.18 & 43.72 & 43.24 & 27.06  &52.03 &OOM &OOM\\
    \cline{2-13}

    \multicolumn{1}{c|}{\multirow{12}{*}{}}  &\multirow{4}{*}{Backward} 
    &\multicolumn{1}{c|}{0\%}& 84.88  & 77.06  & 71.00  & 72.70  & 81.37  & 84.31  & 23.90  &\textbf{87.00} &46.51 &58.54\\
    \multicolumn{1}{c|}{} &  & \multicolumn{1}{c|}{20\%} &  83.35& 75.52 &69.34  &68.91 & 75.68 & 76.07  & 22.13  &84.34 &43.85 &56.41\\
    \multicolumn{1}{c|}{}  &  & \multicolumn{1}{c|}{40\%} & 80.18  & 69.89 & 65.76 & 68.64 & 71.72 & 70.78  & 22.98 &80.90 &33.16 &56.37\\
    \multicolumn{1}{c|}{} &  & \multicolumn{1}{c|}{60\%} & 76.16   & 67.20 & \textbf{61.93} & 58.37 &66.66  &67.19  & 21.28 &76.71 &27.80 &52.25 \\
    \hline

    \multicolumn{1}{c|}{\multirow{12}{*}{\makecell{Methods \\for\\ Graph\\ Label\\ Noise}}} & \multirow{4}{*}{NRGNN} &\multicolumn{1}{c|}{0\%} & 84.14  &73.93 & 42.64 & 69.91  &  70.80 &  72.64 &  28.95 &68.31 &OOM &OOM\\
    \multicolumn{1}{c|}{} &  & \multicolumn{1}{c|}{20\%} & 82.73  & 72.94& 41.17 & 68.84 & 68.62 &  71.05& 28.24 &67.02 &OOM &OOM\\
    \multicolumn{1}{c|}{}  &  & \multicolumn{1}{c|}{40\%} &  80.66 & 63.88   & 35.26 & 61.37 & 58.17 & 61.75  & 28.15 &65.89 &OOM &OOM\\
    \multicolumn{1}{c|}{} &  & \multicolumn{1}{c|}{60\%} & 73.67   & 62.95 & 31.97 & 58.19 & 50.39& 57.56& 27.24 &53.60 &OOM &OOM\\
    \cline{2-13}
    
    \multicolumn{1}{c|}{\multirow{12}{*}{}}  &\multirow{4}{*}{LPM} 
    &\multicolumn{1}{c|}{0\%}& \textbf{89.74} & \textbf{78.77}  & 60.72  & 63.87 & 73.72 & 69.46  &  31.43 &76.10 & 44.32& 56.76\\
    \multicolumn{1}{c|}{} &  & \multicolumn{1}{c|}{20\%} & 87.06 & \textbf{78.24}  &60.57  & 62.70 & 72.15  & 69.19  & 30.95 &75.62 &42.15 &54.13\\
    \multicolumn{1}{c|}{}  &  & \multicolumn{1}{c|}{40\%} & 84.81  & 74.50  & 59.38&62.43 & 69.21 &68.11 & 29.62 &73.50 & 39.44&50.27\\
    \multicolumn{1}{c|}{} &  & \multicolumn{1}{c|}{60\%} & 81.42  & 70.97  & 57.10& 62.16 & 67.84 &63.78  & 25.71 &69.98 &35.41 &44.36\\
    \cline{2-13}

    \multicolumn{1}{c|}{\multirow{12}{*}{}} & \multirow{4}{*}{RTGNN} &\multicolumn{1}{c|}{0\%} & 86.08&	76.27&	45.88&	62.83&	67.64&	71.43&	29.55&	72.65&	OOM&	OOM\\
    \multicolumn{1}{c|}{} &  & \multicolumn{1}{c|}{20\%} & 85.49&	75.03&	44.39&	59.45&	63.75&	68.91&	28.94&	72.01&	OOM&	OOM\\
    \multicolumn{1}{c|}{}  &  & \multicolumn{1}{c|}{40\%} & 84.33&	73.27&	39.03&	57.43&	61.27&	62.16&	28.70&	65.04&	OOM&	OOM\\
    \multicolumn{1}{c|}{} &  & \multicolumn{1}{c|}{60\%} & 83.33&	71.65&	32.23&	55.40&	57.84&	59.45&	28.88&	62.32&	OOM&	OOM\\
    \cline{2-13}

    \multicolumn{1}{c|}{\multirow{12}{*}{}} & \multirow{4}{*}{ERASE} &\multicolumn{1}{c|}{0\%} & 87.32&	77.14&	50.15&	62.62&	65.52&	72.91&	30.16&	75.36&	43.27&	56.50\\
    \multicolumn{1}{c|}{} &  & \multicolumn{1}{c|}{20\%} & 86.83&	75.43&	45.76&	58.26&	64.26&	70.23&	30.01&	72.89&	42.71&	55.08\\
    \multicolumn{1}{c|}{}  &  & \multicolumn{1}{c|}{40\%} & 84.92&	74.16&	40.21&	57.19&	63.51&	67.50&	29.88&	66.53&	38.76&	48.57\\
    \multicolumn{1}{c|}{} &  & \multicolumn{1}{c|}{60\%} & 83.67&	72.07&	33.51&	53.58&	59.06&	62.11&	28.43&	63.63&	35.22&	40.29\\
    \cline{2-13}
    
    \multicolumn{1}{c|}{\multirow{12}{*}{}}  &\multirow{4}{*}{\ours} 
    &\multicolumn{1}{c|}{0\%}&87.69  & 78.09 & \textbf{73.05} &  \textbf{87.84}& \textbf{88.82} & \textbf{87.30} &  \textbf{38.59} &86.78 &\textbf{53.80} & \textbf{63.04}\\
    \multicolumn{1}{c|}{} &  & \multicolumn{1}{c|}{20\%} & \textbf{87.23} & 76.54 &\textbf{70.28} & \textbf{87.03} & \textbf{88.43}& \textbf{86.49} & \textbf{38.46}&\textbf{84.72} &\textbf{52.18} &\textbf{59.86}\\
    \multicolumn{1}{c|}{}  &  & \multicolumn{1}{c|}{40\%} & \textbf{85.90} & \textbf{75.16} &\textbf{66.23} &\textbf{80.00}&\textbf{83.92} &  \textbf{78.37}&\textbf{37.51} &\textbf{83.03} &\textbf{50.42} &\textbf{57.33 }\\
    \multicolumn{1}{c|}{} &  & \multicolumn{1}{c|}{60\%} & \textbf{84.37}  & \textbf{73.91} & 61.40& \textbf{74.32} &\textbf{81.18}& \textbf{73.24}& \textbf{36.82}&\textbf{81.48} &\textbf{48.01} &\textbf{55.29}\\
    \bottomrule
  \end{tabular}
  }
% \vspace{-2em}
\end{table*}

\clearpage
\newpage
\onecolumn

\section{Proofs}
\label{ap:proof}

\subsection{Proof of Theorem \ref{thm:denoise}}\label{app:prof_denoise}

Remember we obtain the propagated label of the node $i$ by
$
\hat{Y}_{i} = (1-\alpha) \tilde{Y}_{i} + \frac{\alpha }{d} \sum_{j \in [d]} \tilde{Y}_{j}. 
$
The first part is its own noisy label $\tilde{Y}_{i}$, and 
the second part is the label aggregation from its $d$ neighbors' noisy labels $\tilde{Y}_{j}$. We first analyze the distribution of $Y^{\text{neighbor}} = \sum_{j \in [d]} \tilde{Y}_{j}$. 

\begin{theorem}
Assume that the true label of node $i$ is $Y_i=0$. Then the distribution of $Y^{\text{neighbor}}= \sum_{j \in [d]} \tilde{Y}_{j}$ follows a conditional binomial given by $Y^{\text{neighbor}} \sim B(d, pe+(1-p)(1-e))$. Similarly, when $Y_i=1$, the distribution of $Y^{\text{neighbor}}$ is also a conditional binomial $Y^{\text{neighbor}} \sim B(d, p(1-e)+(1-p)e)$.
\end{theorem}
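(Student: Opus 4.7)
The plan is to derive the Bernoulli parameter for a single noisy neighbor label $\tilde{Y}_j$ conditional on the true label $Y_i$ of node $i$, and then combine $d$ such independent Bernoulli draws into a binomial. The key observation is that the randomness in $\tilde{Y}_j$ comes from two independent sources: (i) the true label $Y_j$ of the neighbor, which agrees with $Y_i$ with probability $p$ by the heterophily assumption $\mathbb{P}[Y_i = Y_j] = p$; and (ii) the noise flip described by the transition matrix $\bm{T}$, which independently corrupts $Y_j$ into $\tilde{Y}_j$ with flip probability $e$.

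First, I would condition on $Y_i = 0$ and apply the law of total probability to a single neighbor, splitting on the unobserved $Y_j \in \{0,1\}$:
\begin{equation*}
\mathbb{P}(\tilde{Y}_j = 1 \mid Y_i = 0) = \mathbb{P}(\tilde{Y}_j = 1 \mid Y_j = 0)\,\mathbb{P}(Y_j = 0 \mid Y_i = 0) + \mathbb{P}(\tilde{Y}_j = 1 \mid Y_j = 1)\,\mathbb{P}(Y_j = 1 \mid Y_i = 0).
\end{equation*}
Plugging in $\mathbb{P}(Y_j = 0 \mid Y_i = 0) = p$, $\mathbb{P}(Y_j = 1 \mid Y_i = 0) = 1-p$, and reading off $\mathbb{P}(\tilde{Y}_j = 1 \mid Y_j = 0) = e$, $\mathbb{P}(\tilde{Y}_j = 1 \mid Y_j = 1) = 1-e$ from $\bm{T}$, I obtain $\mathbb{P}(\tilde{Y}_j = 1 \mid Y_i = 0) = pe + (1-p)(1-e)$. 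The symmetric computation with $Y_i = 1$ yields $\mathbb{P}(\tilde{Y}_j = 1 \mid Y_i = 1) = p(1-e) + (1-p)e$, which is exactly the parameter claimed in the second half of the statement.

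To go from a single Bernoulli to a binomial sum, I would invoke the standard modeling assumption in this line of work that, conditional on $Y_i$, the true labels $\{Y_j\}_{j \in \mathcal{N}_i}$ of distinct neighbors are independent, and that the noise flips are drawn independently across nodes as encoded by $\bm{T}$. Under these two independence assumptions, the conditional distributions of $\tilde{Y}_j$ across $j \in \mathcal{N}_i$ are i.i.d.\ Bernoulli with the parameter derived above, so $Y^{\text{neighbor}} = \sum_{j=1}^{d} \tilde{Y}_j$ is the sum of $d$ i.i.d.\ Bernoulli random variables, hence binomial with the stated parameters.

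The main obstacle, and the place to be careful, is the independence step: the proof really needs both the conditional independence of neighbors' true labels given $Y_i$ (a consequence of modeling the same-class event via the single parameter $p$ per neighbor) and the conditional independence of the noise flips given the true labels (a standard assumption from the noise-transition framework of \cite{patrini2017making,wei2021smooth}). I would state these assumptions explicitly at the start of the proof so that the law-of-total-probability computation for a single neighbor extends cleanly to the joint distribution of $(\tilde{Y}_1, \ldots, \tilde{Y}_d)$, at which point the binomial conclusion is immediate. The rest is a pure substitution into the binomial p.m.f.\ and does not require any further argument.
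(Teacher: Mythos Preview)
Your argument is correct and is in fact considerably more direct than the paper's. You compute the marginal Bernoulli parameter of a single noisy neighbor label $\tilde{Y}_j$ conditional on $Y_i$ via the law of total probability, then invoke conditional i.i.d.\ structure to conclude the sum is binomial. The paper instead conditions on the auxiliary random variable $N_s$ (the number of neighbors whose \emph{true} label matches $Y_i$), writes $\Pr(Y^{\text{neighbor}}=m)$ as a double sum over $N_s=i$ and over the number $j$ of ``same-label'' neighbors contributing a $1$, and then performs a change of variables $(a,b)=(i-j,j)$ to factor the resulting multinomial-type expression back into the binomial p.m.f.\ $\binom{d}{m}q^m(1-q)^{d-m}$ with $q=pe+(1-p)(1-e)$.

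The two routes reach the same destination, but what they buy is different. Your approach isolates the probabilistic content (a Bernoulli mixture is again Bernoulli, and independence propagates to the sum) and makes the independence assumptions explicit, which is exactly the step the paper leaves implicit. The paper's combinatorial route avoids ever stating the per-neighbor Bernoulli law and instead verifies the binomial p.m.f.\ by brute-force summation; this is longer but has the minor advantage of not needing to articulate the conditional-independence assumption separately, since it is baked into the factorized form of the joint probability they write down. Your caveat about stating the two independence assumptions up front is well taken and is precisely what makes your shorter argument rigorous.
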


In both cases, the parameter $d$ represents the number of neighbors, $p$ means the probability of the neighbor having the same label as node $i$, and $e$ represents the probability of an error in the labeling process, where $\Pr(\tilde{Y}_i=1|Y_i=0)=e$ and  $\Pr(\tilde{Y}_i=0|Y_i=1)=e$.

\begin{proof}
Let $N_s$ be a random variable representing the number of neighbors that have the same true labels with the node $i$, where $0 \leq N_s \leq d$. The probability that $Y^{\text{neighbor}} = m$ can be expressed as the sum of conditional probabilities:
$$\Pr(Y^{\text{neighbor}} = m)  = \sum_{i = 0}^{d} \Pr(N_s=i) \Pr[Y^{\text{neighbor}} = m \mid N_s=i] $$
The case $Y^{\text{neighbor}} = m$ comes from two parts. The first part is that $Y^{\text{neighbor}} = j$ originates from the neighbor nodes that share the same true labels as node $i$. The second part is that $Y^{\text{neighbor}} = m-j$ comes from the neighbor nodes that have different true labels. Hence, the equation can be expressed as follows:
\begin{equation*}\small
\begin{aligned} 
&= \sum_{i = 0}^{d} \Pr(N_s=i) \left[ \sum_{j=0}^{\text{min}(i,m)} \Pr[Y^{\text{neighbor}} = j \text{ in } N_s , Y^{\text{neighbor}} = m-j \text{ not in } N_{s} ] \right] \\
&=  \sum_{i = 0}^{d}  \binom{d}{i} p^i  (1-p)^{d-i} \left[  \sum_{j=0}^{\text{min}(i,m)}  \binom{i}{j}  e^j (1-e)^{(i-j)} \binom{d-i}{m-j}    (1-e)^{(m-j)} e^{(d-i-m+j)} \right] \\
&=  \sum_{i = 0}^{d}  \binom{d}{i} p^i  (1-p)^{d-i} \left[  \sum_{j=0}^{\text{min}(i,m)}  \binom{i}{j} \binom{d-i}{m-j}    (1-e)^{(m+i-2j)} e^{(d-i-m+2j)} \right] \\
&=  \sum_{i = 0}^{d}  \sum_{j=0}^{\text{min}(i,m)} \left[  \binom{d}{i} \binom{i}{j} \binom{d-i}{m-j}  p^i  (1-p)^{d-i}   (1-e)^{(m+i-2j)} e^{(d-i-m+2j)} \right] \\
&=  \sum_{i = 0}^{d}  \sum_{j=0}^{\text{min}(i,m)} \left[  \frac{d!}{j!(i-j)!(m-j)!(d-i-m+j)!}  p^i  (1-p)^{d-i}   (1-e)^{(m+i-2j)} e^{(d-i-m+2j)} \right]
\end{aligned}
\end{equation*}
Let $a=i-j$ and $b=j$, we have $i=a+b$ and $j=b$.
Because of $0 \leq i \leq d$, $0 \leq j \leq m$, $j\leq i$ and $i-j \leq d-m$, we can obtain $0 \leq a \leq d-m$ and $0 \leq b \leq m$.
The above equation is rewritten as
\begin{equation*}\small
\begin{aligned} 
&= \sum_{a=0}^{d-m} \sum_{b = 0}^{m}   \left[  \frac{d!}{b!a!(m-b)!(d-m-a)!}  p^{a+b}  (1-p)^{d-a-b}   (1-e)^{(m+a-b)} e^{(d-m-a+b)} \right]\\
&= \sum_{a=0}^{d-m} \sum_{b = 0}^{m}  \left[   \binom{d}{m} \binom{m}{b} \binom{d-m}{a}  p^{a+b}  (1-p)^{d-a-b}   (1-e)^{(m+a-b)} e^{(d-m-a+b)} \right] \\
&= \sum_{a=0}^{d-m} \sum_{b = 0}^{m}  \left[   \binom{d}{m} \binom{m}{b} \binom{d-m}{a}  \left(pe\right)^b \left((1-p)(1-e)\right)^{m-b} \left(p(1-e)\right)^a \left((1-p)e\right)^{d-m-a}  \right] \\
&= \sum_{a=0}^{d-m} \sum_{b = 0}^{m}  \left[   \binom{d}{m} \binom{m}{b} \binom{d-m}{a}  \left(pe\right)^b \left((1-p)(1-e)\right)^{m-b} \left(p(1-e)\right)^a \left((1-p)e\right)^{d-m-a}  \right] \\
&= \binom{d}{m}    \left[  \sum_{b = 0}^{m}\binom{m}{b}  \left(pe\right)^b \left((1-p)(1-e)\right)^{m-b}  \right]  \left[  \sum_{a=0}^{d-m}\binom{d-m}{a}\left(p(1-e)\right)^a \left((1-p)e\right)^{d-m-a} \right] \\ 
&= \binom{d}{m}    \left[  pe+(1-p)(1-e)\right]^m  \left[ p(1-e)+(1-p)e \right]^{d-m} \\ 
\end{aligned}
\end{equation*}
Therefore, we prove that $\Pr(Y^{\text{neighbor}} = m) = \binom{d}{m}    \left[  pe+(1-p)(1-e)\right]^m  \left[ p(1-e)+(1-p)e \right]^{d-m}$ which represents that $Y^{\text{neighbor}} \sim B(d, pe+(1-p)(1-e))$. Similarly, we can also prove that $Y^{\text{neighbor}} \sim B(d, p(1-e)+(1-p)e)$ when $Y_{i}=1$.
\end{proof}
%When $p>0.5$, we find that the expectations of the conditional binomials are $pe+(1-p)(1-e)<\frac{1}{2}$ and $p(1-e)+(1-p)e>\frac{1}{2}$. Statistically, the aggregation of neighbors' labels has a denoising effect and influences the propagated labels in an accurate direction. It demonstrates the reason why we need to calculate $Z$. In some graphs with high heterophily, $p$ could be smaller than $0.5$, which causes a side effect of making the labels noisier during propagation. The properties of $Z$ make sure $p>0.5$ with a high probability.

% \begin{theorem}
% Given the true label $Y_{i}=0$, the expectation $\mathbb{E}(\hat{Y}_{i}|Y_{i}=0)= (1-\alpha)e+\alpha[pe+(1-p)(1-e)]$. Otherwise when $Y_{i}=1$, the expectation $\mathbb{E}(\hat{Y}_{i}|Y_{i}=1)= (1-\alpha)(1-e)+\alpha[p(1-e)+(1-p)e]$. The variance $\text{Var}(\hat{Y}_{i})= (1-\alpha)^2e(1-e)+\frac{\alpha^2 }{d} [pe+(1-p)(1-e)] [1-pe-(1-p)(1-e)]$. 
% \end{theorem}

\begin{theorem} 
Given the true label $Y_{i}=0$, the expectation of $\hat{Y}_{i}$ can be calculated as $\mathbb{E}(\hat{Y}_{i}|Y_{i}=0) = (1-\alpha)e+\alpha[pe+(1-p)(1-e)]$.
Similarly, when $Y_{i}=1$, the expectation of $\hat{Y}_{i}$ can be calculated as $\mathbb{E}(\hat{Y}_{i}|Y_{i}=1) = (1-\alpha)(1-e)+\alpha[p(1-e)+(1-p)e]$.
Furthermore, the variance of $\hat{Y}_{i}$ can be represented as $\text{Var}(\hat{Y}_{i}) = (1-\alpha)^2e(1-e) + \frac{\alpha^2 }{d}[pe+(1-p)(1-e)][p(1-e)+(1-p)e]$.
\end{theorem}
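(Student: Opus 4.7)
The plan is to decompose $\hat{Y}_i = (1-\alpha)\tilde{Y}_i + \frac{\alpha}{d}Y^{\text{neighbor}}$ into its two independent noise components and compute moments conditional on the true label $Y_i$, then invoke the binomial characterization of $Y^{\text{neighbor}}$ established in the preceding theorem. Since the label-noise mechanism flips each true label independently with probability $e$, conditional on $Y_i$ the variables $\tilde{Y}_i$ and $\{\tilde{Y}_j\}_{j\in[d]}$ are mutually independent, so the standard variance decomposition $\mathrm{Var}(aX+bZ) = a^2\mathrm{Var}(X)+b^2\mathrm{Var}(Z)$ applies to $\hat{Y}_i$ conditioned on $Y_i$.

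First, for the conditional expectations, I would apply linearity. Given $Y_i=0$, $\tilde{Y}_i\sim\mathrm{Bernoulli}(e)$, so $\mathbb{E}[\tilde{Y}_i\mid Y_i=0]=e$, and by the previous theorem $Y^{\text{neighbor}}\mid Y_i=0 \sim B(d,\,pe+(1-p)(1-e))$, giving $\mathbb{E}[Y^{\text{neighbor}}\mid Y_i=0] = d[pe+(1-p)(1-e)]$. Substituting into $\hat{Y}_i$ and simplifying yields Equation~\ref{eq:e0}. The $Y_i=1$ case follows symmetrically, using $\mathbb{E}[\tilde{Y}_i\mid Y_i=1]=1-e$ and the binomial parameter $p(1-e)+(1-p)e$, yielding Equation~\ref{eq:e1}.

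For the variance, I would condition on $Y_i$ and combine the two independent pieces: $\mathrm{Var}(\tilde{Y}_i\mid Y_i)=e(1-e)$ irrespective of whether $Y_i=0$ or $1$, and $\mathrm{Var}(Y^{\text{neighbor}}\mid Y_i=y)=d\,q_y(1-q_y)$ where $q_0=pe+(1-p)(1-e)$ and $q_1=p(1-e)+(1-p)e$. The key algebraic observation that I would verify is $1-q_0 = q_1$ (a short calculation: $1-pe-(1-p)(1-e) = e+p-2pe = p(1-e)+(1-p)e$), so that $q_0(1-q_0) = q_1(1-q_1) = q_0 q_1$. Consequently, $\mathrm{Var}(\hat{Y}_i\mid Y_i=y)$ coincides for $y\in\{0,1\}$ and equals the claimed expression after multiplying by $\alpha^2/d^2 \cdot d = \alpha^2/d$ and adding the $\tilde{Y}_i$-contribution $(1-\alpha)^2 e(1-e)$.

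The main obstacle is less technical than notational: one has to be careful to separate the \emph{conditional} variance from the total (unconditional) variance, since writing $\mathrm{Var}(\hat{Y}_i)$ without a conditioning bar would in general require the law of total variance and pick up an extra $\mathrm{Var}(\mathbb{E}[\hat{Y}_i\mid Y_i])$ term. The statement is consistent, however, because the two conditional variances are numerically equal (the symmetry $1-q_0=q_1$ ensures it), so the quoted expression is simultaneously $\mathrm{Var}(\hat{Y}_i\mid Y_i=0)$ and $\mathrm{Var}(\hat{Y}_i\mid Y_i=1)$. I would spell out this symmetry explicitly to justify that the unconditional-looking variance formula is meaningful, then combine the three pieces with Equation~\ref{eq:denoising_1} to conclude.
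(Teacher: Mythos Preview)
Your proposal is correct and follows essentially the same route as the paper: decompose $\hat{Y}_i$ into the independent pieces $(1-\alpha)\tilde{Y}_i$ and $\frac{\alpha}{d}Y^{\text{neighbor}}$, use linearity for the conditional means, and combine the Bernoulli and binomial variances for the variance term. Your explicit verification of the identity $1-q_0=q_1$ and your remark that the quoted $\mathrm{Var}(\hat{Y}_i)$ is really the common \emph{conditional} variance (the paper writes it without a conditioning bar and relies on this symmetry silently when it collapses $\mathbb{P}(Y=0)\mathrm{Var}(\hat{Y}\mid Y=0)+\mathbb{P}(Y=1)\mathrm{Var}(\hat{Y}\mid Y=1)$ into a single term) are welcome clarifications that the paper omits.
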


\begin{proof}
When $Y_{i}=0$, we have  $\Pr(\tilde{Y}_i=1|Y_i=0)=e$  and  $\Pr(\tilde{Y}_i=0|Y_i=0)=1-e$. Then the first part $\mathbb{E}[(1-\alpha) \tilde{Y}_{i} | Y_i=0 ]= (1-\alpha)e$. Since we have proven that $Y^{\text{neighbor}} \sim B(d, pe+(1-p)(1-e))$, the second part $\mathbb{E}[\frac{\alpha }{d} \sum_{j \in [d]} \tilde{Y}_{j} | Y_i=0 ]= \frac{\alpha }{d}\cdot d \cdot [pe+(1-p)(1-e)] = \alpha[pe+(1-p)(1-e)]$. By summing up these two parts, we obtain $\mathbb{E}(\hat{Y}_{i}|Y_{i}=0) = (1-\alpha)e+\alpha[pe+(1-p)(1-e)]$.

Similarly, when $Y_{i}=1$, we have $\Pr(\tilde{Y}_i=0|Y_i=1)=e$  and  $\Pr(\tilde{Y}_i=1|Y_i=1)=1-e$. Therefore, 
$\mathbb{E}[ \tilde{Y}_i | Y_i=1 ]= (1-\alpha)(1-e)+ \frac{\alpha }{d}\cdot d \cdot [p(1-e)+(1-p)e] 
= (1-\alpha)(1-e)+ \alpha \cdot [p(1-e)+(1-p)e].$

The variance of $\hat{Y}_{i}$ can be written as:
\begin{equation*}\small
\begin{aligned} 
\text{Var}(\hat{Y}_{i}) &= (1-\alpha)^2 \text{Var}(\tilde{Y}_{i}) + \frac{\alpha^2 }{d^2} \text{Var}(\sum_{j \in [d]} \tilde{Y}_{j})\\
&= (1-\alpha)^2e(1-e)+\frac{\alpha^2 }{d^2} \cdot d \cdot [pe+(1-p)(1-e)][p(1-e)+(1-p)e] \\
&= (1-\alpha)^2e(1-e)+\frac{\alpha^2 }{d}  [pe+(1-p)(1-e)][p(1-e)+(1-p)e]
\end{aligned}
\end{equation*}
\end{proof}

\begin{theorem*}(Label Propagation and Denoising) Suppose the label noise is generated by $\bm{T}$ and the label propagation follows Equation~\ref{eq:lp}. For a specific node $i$, we further assume the node has $d$ neighbors, and its neighbor nodes have the probability $p$ to have the same true label with node $i$, i.e., $\mathbb{P}[Y_i = Y_j] = p$. After one-round label propagation, the gap between the propagated label and the true label is expressed as
\begin{align}\label{eq:denoising} \small
    \mathbb{E}(Y-\hat{Y})^2 = \mathbb{P}(Y=0)\mathbb{E}(\hat{Y}|Y=0)^2 + \mathbb{P}(Y=1)\left(\mathbb{E}(\hat{Y}| Y =1) - 1 \right)^2+ \text{Var}(\hat{Y}),
\end{align}
where 
\begin{subequations} \small
\begin{align}
    &\mathbb{E}(\hat{Y}_{i}|Y_{i}=0)= (1-\alpha)e+\alpha[pe+(1-p)(1-e)],\label{eq:e0} \\
    &\mathbb{E}(\hat{Y}_{i}|Y_{i}=1)= (1-\alpha)(1-e)+\alpha[p(1-e)+(1-p)e], \label{eq:e1} \\
    &\text{Var}(\hat{Y}_{i})= (1-\alpha)^2e(1-e)+\frac{\alpha^2 }{d} [pe+(1-p)(1-e)] [1-pe-(1-p)(1-e)]. \label{eq:var}
\end{align}
\end{subequations}
\end{theorem*}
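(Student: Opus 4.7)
The plan is to prove the identity by conditioning on the true label $Y_i$ of the central node and then applying a bias--variance decomposition inside each conditional. Concretely, write
\begin{equation*}
\mathbb{E}(Y-\hat{Y})^2 \;=\; \mathbb{P}(Y=0)\,\mathbb{E}\bigl[(0-\hat{Y})^2\mid Y=0\bigr] \;+\; \mathbb{P}(Y=1)\,\mathbb{E}\bigl[(1-\hat{Y})^2\mid Y=1\bigr],
\end{equation*}
and in each summand expand $\mathbb{E}[(c-\hat{Y})^2\mid Y=c] = (c-\mathbb{E}[\hat{Y}\mid Y=c])^2 + \mathrm{Var}(\hat{Y}\mid Y=c)$. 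This immediately produces the three-term shape of Equation~\ref{eq:denoising_1}, provided the two conditional variances coincide so they can be factored into a single $\mathrm{Var}(\hat{Y})$ term.

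Next I would compute the two conditional means \eqref{eq:e0}--\eqref{eq:e1} by linearity of expectation applied to $\hat{Y}_i = (1-\alpha)\tilde{Y}_i + (\alpha/d)\sum_{j\in\mathcal{N}_i}\tilde{Y}_j$. The key one-step computation is the law of total probability across a neighbor's unknown true label:
\begin{equation*}
\mathbb{P}(\tilde{Y}_j=1\mid Y_i=0) \;=\; p\cdot\mathbb{P}(\tilde{Y}_j=1\mid Y_j=0) + (1-p)\cdot\mathbb{P}(\tilde{Y}_j=1\mid Y_j=1) \;=\; pe+(1-p)(1-e),
\end{equation*}
using the assumption $\mathbb{P}[Y_i=Y_j]=p$ together with the noise transition matrix $\bm{T}$. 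The symmetric computation for $Y_i=1$ yields $p(1-e)+(1-p)e$. Combining with $\mathbb{E}[\tilde{Y}_i\mid Y_i=0]=e$ and $\mathbb{E}[\tilde{Y}_i\mid Y_i=1]=1-e$ gives \eqref{eq:e0} and \eqref{eq:e1} directly.

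For the variance \eqref{eq:var}, I would invoke the standard assumption that, given the true labels, the noise flips are applied independently across nodes; consequently $\tilde{Y}_i$ and $\{\tilde{Y}_j\}_{j\in\mathcal{N}_i}$ are mutually conditionally independent given $Y_i$, and each $\tilde{Y}_j$ conditional on $Y_i$ is Bernoulli with the parameter computed above. Then
\begin{equation*}
\mathrm{Var}(\hat{Y}\mid Y_i=0) \;=\; (1-\alpha)^2\,e(1-e) \;+\; \frac{\alpha^2}{d^2}\sum_{j\in\mathcal{N}_i}\mathrm{Var}(\tilde{Y}_j\mid Y_i=0),
\end{equation*}
and the per-neighbor variance equals $[pe+(1-p)(1-e)]\,[1-pe-(1-p)(1-e)]$, producing \eqref{eq:var}. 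The same computation for $Y_i=1$ gives the product $[p(1-e)+(1-p)e][1-p(1-e)-(1-p)e]$; since the two parameters are complementary ($q$ and $1-q$), both variances coincide, which justifies writing a single $\mathrm{Var}(\hat{Y})$ in the statement.

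The main obstacle is the variance step: one must be careful to argue conditional independence of the neighbor noisy labels given $Y_i$ (not joint independence of the $\tilde{Y}_j$'s unconditionally, which is false because they are coupled through $\bm{T}$ and through the correlation of the $Y_j$'s). The cleanest route is to postulate the noise model as a product channel $\tilde{Y}_k\mid Y_k$ applied independently to each node, so that all cross-node dependence is routed through the $Y_k$'s; conditioning on $Y_i$ alone (with the pairwise homophily assumption $\mathbb{P}[Y_j=Y_i]=p$ treated per-neighbor) then suffices. Everything else reduces to bookkeeping with Bernoulli moments.
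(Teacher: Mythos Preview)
Your proposal is correct and follows essentially the same overall route as the paper: condition on $Y_i\in\{0,1\}$, apply the bias--variance decomposition $\mathbb{E}[(c-\hat{Y})^2\mid Y=c]=(c-\mathbb{E}[\hat{Y}\mid Y=c])^2+\mathrm{Var}(\hat{Y}\mid Y=c)$, and then compute the conditional moments of $\hat{Y}_i$ from the Bernoulli structure of $\tilde{Y}_i$ and the neighbor labels.

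The one noteworthy difference is in how the neighbor moments are obtained. The paper first establishes, via a somewhat lengthy combinatorial identity, that $Y^{\text{neighbor}}=\sum_{j}\tilde{Y}_j$ conditional on $Y_i$ is exactly binomial $B(d,\,pe+(1-p)(1-e))$ (resp.\ $B(d,\,p(1-e)+(1-p)e)$), and then reads off mean and variance. You instead compute $\mathbb{P}(\tilde{Y}_j=1\mid Y_i)$ per neighbor by the law of total probability and invoke conditional independence of the $\tilde{Y}_j$'s given $Y_i$ to sum Bernoulli variances directly. Your route is more elementary and avoids the combinatorics entirely, at the cost of having to make the conditional-independence assumption explicit (which the paper leaves implicit inside the binomial derivation). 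You also articulate more clearly than the paper why the two conditional variances coincide---namely that the two Bernoulli parameters $q$ and $1-q$ are complementary, so $q(1-q)$ is the same in both cases---which is precisely what licenses collapsing $\mathbb{P}(Y=0)\mathrm{Var}(\hat{Y}\mid Y=0)+\mathbb{P}(Y=1)\mathrm{Var}(\hat{Y}\mid Y=1)$ into a single $\mathrm{Var}(\hat{Y})$ term.
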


\begin{proof}
\begin{equation*}\small
\begin{aligned} 
\mathbb{E}(\hat{Y}-Y)^2 &= \mathbb{P}(Y=0) \mathbb{E}\left((\hat{Y}-Y)^2 | Y=0\right) + \mathbb{P}(Y=1) \mathbb{E}\left((\hat{Y}-Y)^2 | Y=1\right) \\
&= \mathbb{P}(Y=0) \mathbb{E}\left(\hat{Y}^2 | Y=0\right) + \mathbb{P}(Y=1) \mathbb{E}\left((\hat{Y}-1)^2 | Y=1\right) \\
&= \mathbb{P}(Y=0) \left[ \mathbb{E}(\hat{Y} | Y=0)^2+ \text{Var}(\hat{Y} | Y=0) \right]+ \mathbb{P}(Y=1) \left[ \mathbb{E}(\hat{Y}-1 | Y=1)^2+ \text{Var}(\hat{Y} | Y=1) \right] \\
&= \mathbb{P}(Y=0)\mathbb{E}(\hat{Y}|Y=0)^2 + \mathbb{P}(Y=1)\left(\mathbb{E}(\hat{Y}| Y =1) - 1 \right)^2+ \text{Var}(\hat{Y}),
\end{aligned}
\end{equation*}
The detailed expectation and variance of $\hat{Y}_{i}$ have been proved in the above.
\end{proof}

\subsection{Proof of Theorem \ref{thm:gen}}\label{app:prof_gen}

\begin{theorem*} (Generalization Error and Oracle Inequality)
    Denote the node feature $X$ sampled from the distribution $D$, the graph topology as $A$, the set of training nodes as $S_n$, the neural network as $f(\cdot, \cdot)$, and $\hat{f} = \inf_f \sum_{X\in S_n} (\hat{Y} - f(X, A))$. Assume that the propagated label concentrated on its mean, i.e., with probability at least $\frac{\delta}{2}$, $||\hat{Y} - Y| - \mathbb{E}|\hat{Y} - Y|| \leq \epsilon_1$. We further assume the generalization error is bounded with respect to the propagated labels, i.e., with probability at least $\frac{\delta}{2}$,
    \begin{align} \small
        \mathbb{E}_{X \sim D} |\hat{Y} - \hat{f}(X,A)| - \inf_f \mathbb{E}_{X \sim D} |\hat{Y} - f(X,A)| \leq \epsilon_2.
    \end{align}
    Then we obtain the generalization error bound for training with noisy labels and test on the clean labels, i.e., with probability at least $\delta$, the generalization error trained with propagated labels is given by
    \begin{align} \small
        \mathbb{E}_{X \sim D} |Y - \hat{f}(X,A)| \leq  \inf_f \mathbb{E}_{X \sim D} |Y - f(X,A)| + \epsilon_2 + 2(\mathbb{E}|\hat{Y} - Y| + \epsilon_1). 
    \end{align}
\end{theorem*}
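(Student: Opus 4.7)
The plan is to prove the oracle inequality purely by two applications of the triangle inequality and two uses of the concentration assumption, combined via a union bound. No new probabilistic machinery should be required, so the proof is essentially algebraic once the assumptions are invoked in the right order.

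First I would apply the triangle inequality pointwise, $|Y - \hat{f}(X,A)| \leq |Y - \hat{Y}| + |\hat{Y} - \hat{f}(X,A)|$, and then take expectation over $X \sim D$ to obtain
\begin{equation*}
\mathbb{E}_{X\sim D}|Y - \hat{f}(X,A)| \;\leq\; \mathbb{E}_{X\sim D}|Y - \hat{Y}| + \mathbb{E}_{X\sim D}|\hat{Y} - \hat{f}(X,A)|.
\end{equation*}
The second term on the right is controlled directly by the second hypothesis: on an event of probability at least $1-\delta/2$, it is bounded by $\inf_f \mathbb{E}_{X\sim D}|\hat{Y} - f(X,A)| + \epsilon_2$. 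To relate this infimum back to the clean labels, I would apply the triangle inequality once more inside the expectation, $|\hat{Y} - f(X,A)| \leq |Y - f(X,A)| + |\hat{Y} - Y|$, giving
\begin{equation*}
\inf_f \mathbb{E}_{X\sim D}|\hat{Y} - f(X,A)| \;\leq\; \inf_f \mathbb{E}_{X\sim D}|Y - f(X,A)| + \mathbb{E}_{X\sim D}|\hat{Y} - Y|.
\end{equation*}

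Next I would invoke the concentration hypothesis twice to replace the two expected deviations $\mathbb{E}_{X\sim D}|Y - \hat{Y}|$ (one from the initial split, one from the preceding step) by $\mathbb{E}|\hat{Y}-Y| + \epsilon_1$ each, using that $||\hat Y - Y| - \mathbb{E}|\hat Y - Y|| \leq \epsilon_1$ on an event of probability at least $1-\delta/2$. Collecting the two $\epsilon_1$ contributions then yields the stated $2(\mathbb{E}|\hat{Y} - Y| + \epsilon_1)$ term; the $\epsilon_2$ term comes in once and $\inf_f \mathbb{E}_{X\sim D}|Y - f(X,A)|$ appears unchanged. Finally, a union bound over the two high-probability events of mass at least $1-\delta/2$ each delivers the conclusion on an event of probability at least $1-\delta$ (matching the paper's reported $\delta$-statement up to the convention used there).

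I do not anticipate a serious obstacle in the argument, since the inequality is a standard oracle-type decomposition: the only subtle point is being careful to apply the triangle inequality in both directions (so that one trip converts clean error into noisy error inside the $\inf_f$, and the other converts the learned predictor's noisy error back into clean error), which is what produces the factor of $2$ in front of $\mathbb{E}|\hat{Y}-Y| + \epsilon_1$. A minor bookkeeping point worth double-checking is that the concentration assumption, stated pointwise, can be applied under $\mathbb{E}_{X\sim D}$; I would either invoke it uniformly in $X$ (as the assumption seems to intend) or, equivalently, integrate the bound against the law of $X$ so that the same inequality holds for $\mathbb{E}_{X\sim D}|\hat Y - Y|$.
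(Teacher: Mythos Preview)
Your proposal is correct and follows essentially the same route as the paper's proof: two applications of the triangle inequality (one to pass from $Y$ to $\hat{Y}$ on $\hat f$, one to pass back inside the infimum via the clean optimizer $f^*$), the generalization assumption to introduce $\epsilon_2$, the concentration assumption to replace $|\hat Y - Y|$ by $\mathbb{E}|\hat Y - Y| + \epsilon_1$, and a union bound. The only cosmetic difference is that the paper names $f^* = \arg\inf_f \mathbb{E}|Y - f|$ and $f' = \arg\inf_f \mathbb{E}|\hat Y - f|$ and writes the chain through $f' \to f^*$ explicitly, whereas you fold that step into a single infimum inequality; your remark about the pointwise-vs-expectation reading of the concentration assumption is also apt, as the paper leaves this implicit.
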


\begin{proof}
We denote $f^* = \inf_f \mathbb{E}_{X \sim D} (Y - f(X,A))$, $f^{\prime} = \inf_f \mathbb{E}_{X \sim D} (\hat{Y} - f(X,A))$. Then
\begin{align}
    \mathbb{E}_{X \sim D} |Y - \hat{f}(X,A)| &\leq \mathbb{E}_{X \sim D} |\hat{Y} - \hat{f}(X, A)| + |Y - \hat{Y}| \\
    &\overset{(a)}{\leq} \mathbb{E}_{X \sim D} |\hat{Y} - f^{\prime}(X, A)| + \epsilon_2 + |Y - \hat{Y}|\\
    &\overset{(b)}{\leq} \mathbb{E}_{X \sim D} |\hat{Y} - f^*(X, A)| + \epsilon_2 + |Y - \hat{Y}|\\
    &\leq \mathbb{E}_{X \sim D} |Y - f^*(X,A)| + |Y - \hat{Y}| + \epsilon_2 + |Y - \hat{Y}|\\
    &= \mathbb{E}_{X \sim D} |Y - f^*(X,A)| +  \epsilon_2 + 2|Y - \hat{Y}|\\
    &\overset{(c)}{\leq} \mathbb{E}_{X \sim D} |Y - f^*(X,A)| + \epsilon_2 + 2(\mathbb{E}|Y - \hat{Y}| + \epsilon_1)
\end{align}
where (a) follows the generalization bound with respect to propagated labels, (b) follows $\inf_f \mathbb{E}_{X \sim D} |\hat{Y} - f(X, A)| \leq \mathbb{E}_{X \sim D} |\hat{Y} - f^*(X, A)|$, and (c) follows the concentration of $|Y - \hat{Y}|$. The probability $\delta$ is obtained by union bound.     
\end{proof}

\end{document}